\def\eqref#1{equation~\ref{#1}}
\def\1{\bm{1}}
\DeclareMathAlphabet{\mathsfit}{\encodingdefault}{\sfdefault}{m}{sl}
\SetMathAlphabet{\mathsfit}{bold}{\encodingdefault}{\sfdefault}{bx}{n}
\newcommand{\R}{\mathbb{R}}
\DeclareMathOperator*{\argmin}{arg\,min}
\useunder{\uline}{\ul}{}
\newcommand{\argminF}{\mathop{\mathrm{argmin}}\limits}
\newtheorem{proposition}{Proposition}
\newtheorem{lemma}{Lemma}
\newcommand{\ours}{\textsc{fer}}
\newlist{conditions}{enumerate}{1}
\setlist[conditions]{label=\arabic*,ref=\arabic*}
\title{An intuitive multi-frequency feature representation for SO(3)-equivariant networks}
\author{Dongwon Son, Jaehyung Kim, Sanghyeon Son, Beomjoon Kim \\
Department of AI, KAIST\\
\texttt{\{dongwon.son,kimjaehyung,ssh98son,beomjoon.kim\}@kaist.ac.kr} \\
}
\begin{document}

\maketitle

\begin{abstract}
The usage of 3D vision algorithms, such as shape reconstruction, remains limited because they require inputs to be at a fixed canonical rotation. Recently, a simple equivariant network, Vector Neuron (VN)~\citep{deng2021vector} has been proposed that can be easily used with the state-of-the-art 3D neural network (NN) architectures. However, its performance is limited because it is designed to use only three-dimensional features, which is insufficient to capture the details present in 3D data. In this paper, we introduce an equivariant feature representation for mapping a 3D point to a high-dimensional feature space. Our feature can discern multiple frequencies present in 3D data, which, as shown by~\cite{tancik2020fourier}, is the key to designing an expressive feature for 3D vision tasks. Our representation can be used as an input to VNs, and the results demonstrate that with our feature representation, VN captures more details, overcoming the limitation raised in its original paper.

\end{abstract}

\begin{figure}[h]
\centering
\includegraphics[width=0.85\textwidth]{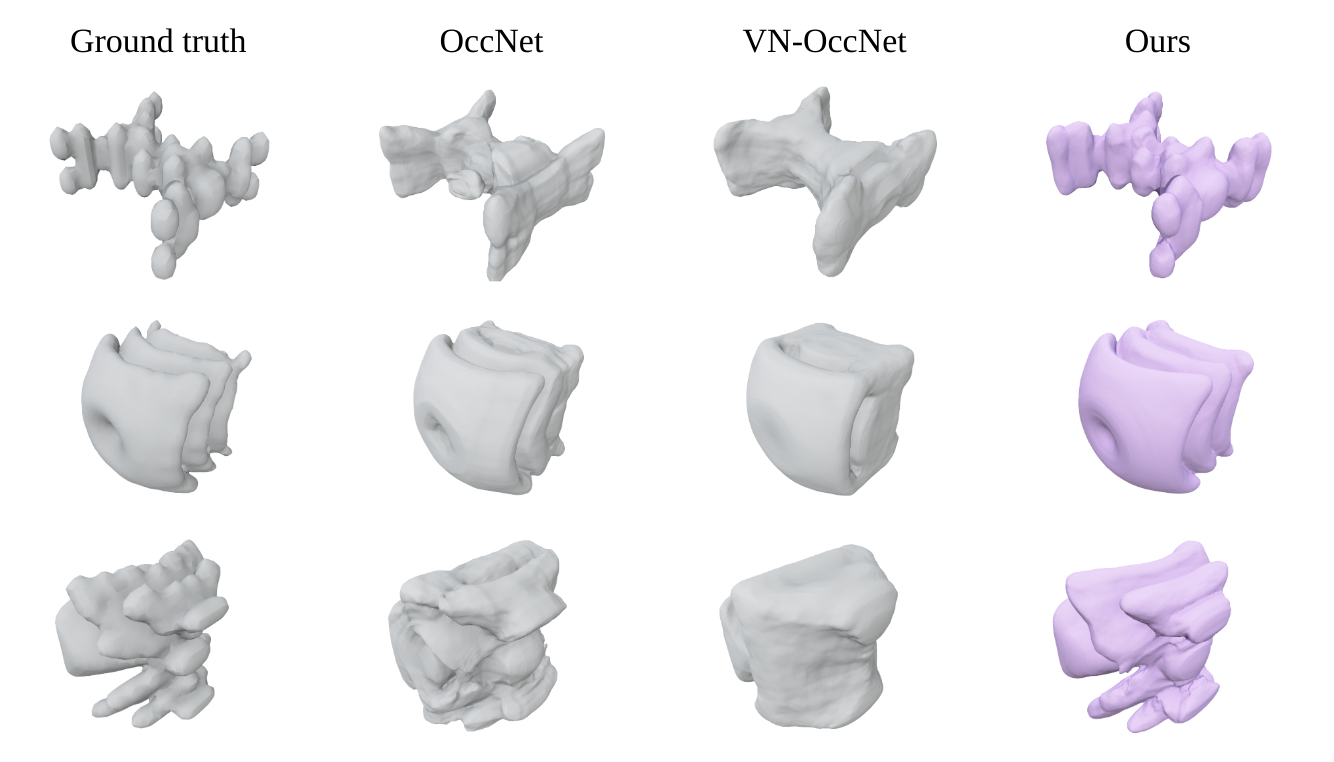}
\caption{EGAD~\citep{morrison2020egad} meshes constructed from the embeddings given by different models based on OccNet~\citep{mescheder2019occupancy} at canonical poses. As already noted in their original paper, VN-OccNet (3rd column), the VN version of OccNet, fails to capture the details present in the ground-truth shapes and does worse than OccNet (2nd column). Using our feature representation, VN-OccNet qualitatively performs better than OccNet (4th column). Note that each of these shapes consists of multiple frequencies -- in some parts of the object, the shape changes abruptly, while in some parts, it changes very smoothly.
}
\label{fig:VN_EVN_recon_egad}
\end{figure}

\section{Introduction}
SO(3) equivariant neural networks (NN) change the output accordingly when the point cloud input is rotated without additional training. For instance, given a point cloud rotated by, say, 90 degrees, an SO(3) equivariant encoder can output an embedding rotated by 90 degrees without ever being trained on the rotated input. Pioneering works such as TFN and SE(3) transformers~\citep{Thomas2018TensorFN, fuchs2020se} use tools from quantum mechanics to design an equivariant network, but they are difficult to understand and require a specific architecture that is incompatible with recent architectural advancements in 3D vision. 

More recently, Vector neuron (VN)~\citep{deng2021vector} has been proposed as a more accessible alternative. By extending a neuron, which typically outputs a scalar, to output a three-dimensional vector and modifying typical operations in NNs (e.g. ReLU activation function, max pooling, etc.), VN guarantees equivariance. Further, because it uses the existing operations in NNs, it can easily be implemented within well-established point processing networks like PointNet~\citep{li2018pointcnn} and DGCNN~\citep{wang2019dynamic}. However, their key limitation is the low dimensionality of the features: unlike TFN or SE(3)-transformers which use high-dimensional features, a feature in vector neurons is confined to a 3D space. This limits the expressivity of the features and cannot capture the details present in 3D data, as depicted in Figure~\ref{fig:VN_EVN_recon}.

In this work, we propose an equivariant feature representation for mapping a 3D point to a high-dimensional space. The key challenge here is designing an algorithm that computes expressive yet rotation equivariant features from 3D point clouds. As shown by~\cite{mildenhall2021nerf} and~\cite{tancik2020fourier}, for 3D data, the effectiveness of features heavily depends on its ability to represent different frequencies present in the input, as 3D shapes typically are multi-frequency signals (e.g. Figure~\ref{fig:VN_EVN_recon_egad}). So, one way is to use a Fourier basis that consists of sinusoids of various frequencies as done in NeRF~\citep{mildenhall2021nerf}. However, this does not guarantee equivariance.

Based on the observation that a rotation matrix can be written as sinusoids whose frequencies are determined by its eigenvalues~\citep{fulton2013representation}, we instead propose to construct a mapping $D: SO(3) \rightarrow SO(n)$ and use $D$ to define our feature representation in a way that is provably equivariant. At a high-level, our idea is to describe the orientation of a given point $\vec{u} \in \mathbb{R}^3$ from a basis axis, say $\hat{z}$, denoted $R^{\hat{z}}(\vec{u})$, and then use $D$ to map $R^{\hat{z}}$ to a higher dimensional feature space such that it defines the same amount of rotation from a basis axis in $\mathbb{R}^n$. Then, we simply apply $D(R^{\hat{z}}(\vec{u}))$ to the chosen basis axis in $\mathbb{R}^n$ to get our feature representation of $\vec{u}$. This guarantees equivariance because when we rotate $\vec{u}$, the corresponding feature would also rotate by the same amount. Figure \ref{fig:psi-intuition} demonstrates this intuition. 

\begin{wrapfigure}{l}{0.48\textwidth}
\vspace{-4mm}
    \centering
    \includegraphics[width=0.48\textwidth]{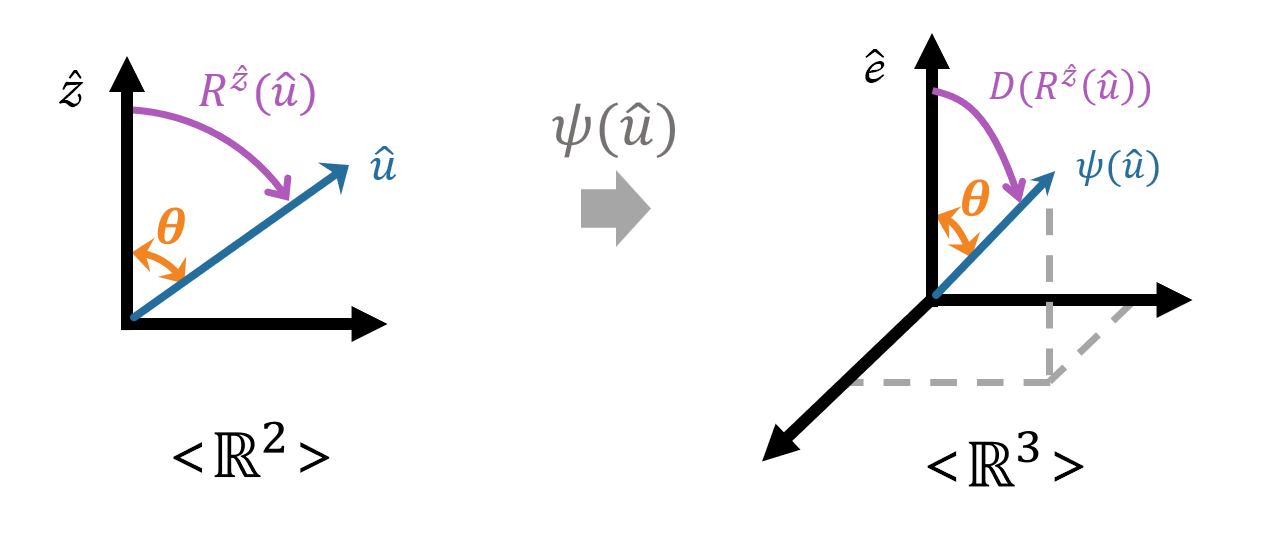}
    \caption{Intuition of our equivariant feature representation, $\psi$, that maps a point in 2D to 3D (i.e. $n=3$) for an illustrative purpose. (Left) The basis axis in 2D is $\hat{z}=[0,1]$, and $\hat{u} = R^{\hat{z}}(\hat{u})\hat{z}$, with $\theta$ as its  amount of rotation from $\hat{z}$. Our $D$ is constructed so that it defines the same amount of rotation, $\theta$, but from a basis in the 3D space, which in this case is chosen to be $\hat{e}=[0,0,1]$. The feature representation of $\hat{u}$ is given by $\psi(\hat{u})=D(R^{\hat{z}}(\hat{u}))\hat{e}$. As you can see, when $\theta$ changes, it rotates both $\hat{u}$ and $\psi(\hat{u})$ by the same amount. Note that the description of magnitude is neglected for brevity.}\vspace{-3mm}
    \label{fig:psi-intuition}
\end{wrapfigure}

We propose a set of mathematical conditions for constructing $D$ that achieves this, and prove that our feature representation is equivariant. Further, we show this rotation-matrix-based feature representation can be written as sinusoids, much like in Fourier bases, whose frequency is determined by the maximum eigenvalues of $D(R)$. Based on this result, we propose an algorithm that can construct $\psi$ that consists of sinusoids of frequency $\lfloor \frac{n-1}{2} \rfloor$.

We call our representation {\bf F}requency-based {\bf E}quivariant feature {\bf R}epresentation (\ours). We use it as an input to VN instead of 3D points and integrate VN into standard point processing architectures, PointNet and DGCNN, and show \ours{}-\textsc{vn}-PointNet and \ours{}-\textsc{vn}-DGCNN achieve state-of-the-art performance among equivariant networks in various tasks, including shape classification, part segmentation, normal estimation, point completion, and shape compression. Notably, unlike the standard VN which performs worse than the non-equivariant counterpart in the point completion and compression tasks at a canonical pose, we show that \ours{}-\textsc{vn} outperforms both of them by capturing the high-frequencies present in the 3D shape as illustrated in Figure \ref{fig:VN_EVN_recon}.

\section{Related Works}

\subsection{$SO(3)$-Equivariant networks}
For 3D inputs, several equivariant methods have been proposed~\citep{weiler20183d, Thomas2018TensorFN, fuchs2020se, brandstetter2021geometric, anderson2019cormorant}. These models use an equivariant feature representation called spherical harmonics to map 3D points to an $n$ dimensional feature space where rotations are described using Wigner-D matrices. While this representation is equivariant, this Wigner-D matrix is from quantum mechanics originally used to describe quantum states~\citep{sakurai1995modern}, and it is difficult to understand why they work, and what the intuitions behind the hyperparameters are. For instance, what does the number of dimensions of representations imply? Does it relate to the intuition that we need to discern multiple frequencies? For readers without a background in quantum mechanics, these questions are hard to answer. Furthermore, these methods are confined to using specific architectures, making it difficult to leverage state-of-the-art 3D vision architectures. One of our goals in this work is to offer a more accessible and intuitive alternative to derive a high-dimensional frequency-based equivariant feature representation.  



Recent advancements focus on creating more adaptable $SO(3)$-equivariant neural networks, especially for point cloud processing~\citep{deng2021vector, puny2021frame, kaba2023equivariance}. Frame averaging method~\citep{puny2021frame} proposes to achieve equivariance by averaging over a group through symmetrization, but its use of Principal Component Analysis (PCA) makes it sensitive to noisy and missing points, a phenomenon prevalent in 3D sensors. \citet{kaba2023equivariance} introduces a method for an equivariant structure by training an extra network that maps rotated inputs to a canonical pose. However, a small error in this network could have a large negative impact on the overall performance.  Among these, Vector Neurons~\cite{deng2021vector} stands out because it achieves equivariance by simply modifying the existing operations in NNs. Its key limitation however is the confinement to the 3D feature space. Our work offers a high dimensional frequency-based representation that can be used with VNs.

\subsection{Frequency-based feature representation}
Frequency-based sinusoidal feature representation has been widely adopted in 3D vision. Perhaps the most well-known usage is in NeRF~\citep{mildenhall2021nerf}, where it maps a viewing position and angle into a high-dimensional, multi-frequency sinusoidal feature representation. Similarly, \cite{tancik2020fourier} shows that using sinusoidal feature representation achieves significantly better performance in shape compression task than using 3D coordinate as an input to the network. The reason this works better than the coordinate-based representation is that, for a complex shape, some parts are very non-smooth and change rapidly with respect to the changes in the position on the shape, while some parts are smooth (e.g. shapes in Figure \ref{fig:VN_EVN_recon_egad}). Using sinusoidal feature representation, such as Fourier basis, at different frequencies helps because they can discern both smooth and non-smooth changes. However, the conventional Fourier basis is not SO(3) equivariant. Our feature representation guarantees equivariance and captures multiple frequencies
\section{Frequency-based Equivariant Feature Representation}
Our feature representation $\psi: \mathbb{R}^3 \rightarrow \mathbb{R}^n$ is defined as
\begin{equation}
\psi(\vec{u}) = ||\vec{u}||D(R^z(\hat{u}))\hat{e}\ ,
\label{eq:feature_mapping}
\end{equation}
where $\vec{u}$ is the input point, $\hat{e}$ is a basis in the $n$ dimensional space, $R^z(\hat{u})$ is a rotation matrix defining the orientation measured from z-axis (i.e. $[0,0,1]$) to $\hat{u}$. We will show that by construction, $\psi$ is provably equivariant, and that it consists of sinusoids whose maximum frequency is $\lfloor \frac{n-1}{2} \rfloor/2\pi$. Let us first describe the conditions and construction of $D$ that make this possible. All proofs for theorems and propositions in this section are in the appendix. 

First and foremost, we would like $D(R)$ to be an element of $SO(n)$. Second, we would like to map every element of $R\in SO(3)$ to a unique element in $SO(n)$ so that changes in $SO(3)$ are uniquely captured in $SO(n)$. This gives us the following set of conditions for $D$.
\begin{align}
\begin{split}
    &\forall R_1, R_2 \in SO(3), \text{if } R_1 \neq R_2, \text{then } D(R_1) \neq D(R_2),\\ 
    &\forall R \in SO(3), D \text{ maps } R \text{ to a single matrix } D(R) \in \mathbb{R}^{n\times n}, \\
    &\forall R \in SO(3), D(R)D(R)^T = D(R)^TD(R) = I,\\
    &\forall R \in SO(3), \text{det}(D(R))=1.
    \label{eqn:D_condition}
    \end{split}
\end{align}

The first two conditions ensure that $D$ uniquely maps for all $R\in SO(3)$. The last two conditions ensure that $D(R)$ belongs to $SO(n)$~\citep{stillwell2008naive}. 

To construct $D$ that satisfies these conditions, we use the exponential form of a rotation matrix. In $\mathbb{R}^3$, given $R\in SO(3)$ expressed in axis-angle representation with $\hat{w} \in \mathbb{R}^3$ as its rotation axis and angle $\theta \in \mathbb{R}$, we have $R=\exp(\theta\hat{w}\cdot\vec{F})$ where $\hat{w}\cdot \vec{F}=\sum_{i=1}^3\hat{w}_iF_i$, $\hat{w}_i$ as $i^{th}$ element of $\hat{w}$, and $\vec{F} = [F_1, F_2, F_3]$,
\begin{center}
    $F_1 = \begin{bmatrix} \begin{smallmatrix} 0 & 0 & 0 \\ 0 & 0 & -1 \\ 0 & 1 & 0 \end{smallmatrix} \end{bmatrix}$,
    $F_2 = \begin{bmatrix} \begin{smallmatrix} 0 & 0 & 1 \\ 0 & 0 & 0 \\ -1 & 0 & 0 \end{smallmatrix} \end{bmatrix}$,
    $F_3 = \begin{bmatrix} \begin{smallmatrix} 0 & -1 & 0 \\ 1 & 0 & 0 \\ 0 & 0 & 0 \end{smallmatrix} \end{bmatrix}$
\end{center}
Intuitively, $F_1,F_2,F_3$ describe the rotation about the $x,y,z$ axes respectively~\citep{fulton2013representation}. For instance, when $\hat{w}=[1,0,0]$, then $R$ is determined solely by $F_1$.

We generalize this representation to $n$ dimensional space, where we use $\vec{J} = [J_1, J_2, J_3]$, $J_i \in \mathbb{R}^{n \times n}$, so that 
\begin{align}
    D(R)=\exp(\theta \hat{\omega} \cdot \vec{J})
    \label{eq:D}
\end{align}
where \(\theta\) and \(\hat{w}\) denote the angle and axis of rotation of $R\in SO(3)$, respectively. Intuitively, just like $F_1, F_2$ and $F_3$ represent axes of rotation about $x,y$ and $z$ axes in $\mathbb{R}^3$, $J_1, J_2$ and $J_3$ represent the angles about axes $\psi(\hat{x}), \psi(\hat{y})$, and $\psi(\hat{z})$. The key here is that they describe the same amount of rotation angle even when they are in different spaces. For instance, if we set $\theta=[0,0,\theta_z]$ and $\hat{w}=[0,0,1]$, then the only contributing term for $D(R)$ is $\theta_z J_3$. Based on this observation, we will construct $\vec{J}$ instead of constructing $D$ directly. We have the following helpful theorem.

\begin{restatable}{theorem}{dproperties} 
    If $J_i \in\mathbb{R}^{n\times n}$ $\forall i\in\{1,2,3\}$ satisfies $-J_i=J_i^T$, \( [J_1, J_2] = J_3 \), \( [J_2, J_3] = J_1 \), \( [J_3, J_1] = J_2 \) where \( [A, B] = AB - BA \), and \( \exp(2m\pi J_i) = I_{n \times n}, \forall m\in \mathbb{Z}\), where $\mathbb{Z}$ is the space of integers, then $D(R)=\exp(\theta \hat{w} \cdot \vec{J}) \in SO(n)$ and satisfies the conditions in Equation~\ref{eqn:D_condition}.
    \label{thm:D_properties}
\end{restatable}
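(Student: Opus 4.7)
The plan is to verify each of the four conditions in Equation~\ref{eqn:D_condition} in turn. The skew-symmetry hypothesis will carry orthogonality and unit determinant, while the periodicity condition $\exp(2m\pi J_i)=I$ is what lets one resolve the ambiguities of the axis--angle parametrization; the commutation relations then do the rest.

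Orthogonality and unit determinant come first and are straightforward. Since each $J_i$ is skew-symmetric, any linear combination $A=\theta\,\hat{w}\cdot\vec{J}=\sum_i\theta\hat{w}_i J_i$ is skew-symmetric as well. The standard identities $\exp(A)^T=\exp(A^T)=\exp(-A)=\exp(A)^{-1}$ yield $D(R)D(R)^T=D(R)^T D(R)=I$, and $\det\exp(A)=e^{\mathrm{tr}(A)}=e^0=1$, so each value of $D$ lies in $SO(n)$.

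Single-valuedness of $D$ is the main technical step. An axis--angle pair $(\hat{w},\theta)$ for $R\in SO(3)$ is ambiguous in three ways: $(\hat{w},\theta)\sim(-\hat{w},-\theta)$, $(\hat{w},\theta)\sim(\hat{w},\theta+2\pi k)$, and when $\theta=\pi$ also $(\hat{w},\pi)\sim(-\hat{w},\pi)$. The first is immediate by linearity of $\hat{w}\mapsto\hat{w}\cdot\vec{J}$ in the exponent. The second and third both reduce to showing $\exp(2\pi\,\hat{w}\cdot\vec{J})=I$ for an \emph{arbitrary} unit $\hat{w}$, whereas the hypothesis only gives this for $\hat{w}\in\{e_1,e_2,e_3\}$. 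To bridge the gap, I would prove the conjugation identity $\exp(\alpha J_i)\,J_j\,\exp(-\alpha J_i)=(\exp(\alpha F_i)\,e_j)\cdot\vec{J}$ by an ODE argument: letting $f(\alpha)=\exp(\alpha J_i)J_j\exp(-\alpha J_i)$, we have $f'(\alpha)=[J_i,f(\alpha)]$, and the linear system in the coefficients of $f$ along $J_1,J_2,J_3$ is solved using the structure constants given by the commutation relations. Iterating this identity along an Euler-angle decomposition of some $\tilde{U}\in SO(3)$ with $\tilde{U}e_3=\hat{w}$ yields an explicit $M=\prod_k\exp(\alpha_k J_{i_k})\in SO(n)$ satisfying $M J_3 M^{-1}=\hat{w}\cdot\vec{J}$. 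Then $\exp(2\pi\,\hat{w}\cdot\vec{J})=M\exp(2\pi J_3)M^{-1}=I$, as required.

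Injectivity is the step I expect to be the main obstacle. With the previous items in hand, $D\colon SO(3)\to SO(n)$ is a well-defined continuous map, and I would argue it is actually a Lie group homomorphism: the commutation relations make $\hat{w}\cdot\vec{F}\mapsto\hat{w}\cdot\vec{J}$ a Lie algebra homomorphism $\mathfrak{so}(3)\to\mathfrak{so}(n)$, and the single-valuedness just established is exactly what lets this homomorphism integrate on $SO(3)$ rather than only on its simply connected double cover. This can be made precise via the Baker--Campbell--Hausdorff formula (to match multiplication in a neighborhood of the identity) together with connectedness of $SO(3)$ (to extend globally). Since $SO(3)$ is simple as a Lie group, $\ker D$ is either trivial or all of $SO(3)$; nontriviality of the $J_i$'s (implicit in the hypothesis, for otherwise $\exp(2m\pi J_i)=I$ carries no content) excludes the latter, so $D$ is injective. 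The only genuinely subtle piece of work is the extension of the periodicity identity from coordinate axes to arbitrary axes; the other three conditions then follow from it by standard Lie-theoretic machinery.
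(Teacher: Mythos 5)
Your proposal is correct and fills a genuine gap in the paper's own argument. The paper organizes the proof into lemmas on compatibility (via Baker--Campbell--Hausdorff and the fact that the structure constants of $\vec{J}$ match those of $\vec{F}$), well-definedness, orthogonality, determinant, and injectivity. Where you differ: (i) for orthogonality and unit determinant you argue directly from skew-symmetry of the exponent ($\exp(A)^T=\exp(-A)$, $\det\exp(A)=e^{\Tr A}=1$), whereas the paper derives them somewhat circuitously from $D(I)=I$ together with compatibility; your route is more self-contained. (ii) For well-definedness, the paper's Lemma~\ref{lem:uniqueness} asserts ``from comparing each element of $\exp(\theta_1\hat\omega_1\cdot\vec F)=\exp(\theta_2\hat\omega_2\cdot\vec F)$'' that the corresponding $\exp(\cdot\,\vec J)$ must agree, which is not a proof; your observation that the real content is $\exp(2\pi\,\hat\omega\cdot\vec J)=I$ for \emph{arbitrary} $\hat\omega$, and that this must be extended from coordinate axes via the conjugation identity $\exp(\alpha J_i)J_j\exp(-\alpha J_i)=(\exp(\alpha F_i)e_j)\cdot\vec J$, is exactly the missing step (the paper derives the same conjugation identity in Lemma~\ref{lem:eigval_share} and Proposition~\ref{prop:eigvalue_J} but does not deploy it for uniqueness). (iii) For injectivity the paper argues concretely: $D(R)=I$ forces $e^{\theta\lambda}=1$ for all eigenvalues $\lambda\in\{0,\pm\mathrm{i},\dots,\pm k\mathrm{i}\}$ of $\hat\omega\cdot\vec J$, hence $\theta\in2\pi\mathbb{Z}$ and $R=I$; you instead invoke simplicity of $SO(3)$ once $D$ is shown to be a Lie group homomorphism. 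Both work, with yours being slicker but demanding more Lie-theoretic background, and both share the same latent hypothesis that the $J_i$ are nontrivial (if $J_i=0$ for all $i$ every stated condition holds vacuously yet $D$ is constant, so injectivity genuinely fails); you flag this, the paper does not, and neither resolves it. A clean way to close it would be to add $k\geq 1$ (or $J_i\neq 0$) as an explicit hypothesis.
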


Given these conditions, we can design an algorithm for constructing $\vec{J}$. But before doing so, let us show the relationship between the eigenvalues of $D(R)$ and the frequency of $\psi$. This will provide a way to design $\vec{J}$ so that $\psi$ has the maximum possible frequency for a given $n$. The following propositions and theorem will help us do that.
\begin{restatable}{proposition}{eigvaluesymm} Suppose $J_1,J_2,J_3$ satisfy the conditions in Theorem~\ref{thm:D_properties}. Then,
    $J_1, J_2,$ and $J_3$ have the same eigenvalues and multiplicities. In particular, the eigenvalues are $\Lambda = \{ -k\mathrm{i}, -(k-1)\mathrm{i}, \dots, -\mathrm{i}, 0, \mathrm{i}, \dots, k\mathrm{i}\}$ for some non-negative integer $k$. Further, if $\lambda$ is an eigenvalue of $J_i$ with multiplicity $m$, then $-\lambda$ is also an eigenvalue of $J_i$ with the same multiplicity $m$.
    \label{prop:eigvalue_symm}
\end{restatable}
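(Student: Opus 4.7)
The four hypotheses on $J_1,J_2,J_3$ say precisely that $(J_1,J_2,J_3)$ is a real $n$-dimensional representation of the Lie algebra $\mathfrak{so}(3)$ which integrates to a representation of the compact group $SO(3)$ itself (the periodicity $\exp(2\pi J_i)=I$ is what excludes the spin representations of $SU(2)$). I will therefore derive the three assertions --- (i) equal spectra of $J_1,J_2,J_3$ with multiplicities, (ii) the consecutive-integer structure of the spectrum, and (iii) the symmetry $m(\lambda)=m(-\lambda)$ --- directly from these four hypotheses, borrowing only the minimum from the representation theory of $\mathfrak{sl}_2(\mathbb{C})$. For (iii), since $J_i$ is real skew-symmetric its spectrum lies in $\mathrm{i}\mathbb{R}$; for any $v\in\mathbb{C}^n$ with $J_iv=\lambda v$, complex conjugation gives $J_i\bar v=\bar\lambda\bar v=-\lambda\bar v$, so $v\mapsto\bar v$ is a conjugate-linear bijection between the $\lambda$- and $(-\lambda)$-eigenspaces.

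For (i), conjugation by $g=\exp(\theta J_k)$ acts on each $J_i$ as $\exp(\theta\,\mathrm{ad}_{J_k})$, and the cyclic brackets give $\mathrm{ad}_{J_3}J_1=J_2$ and $\mathrm{ad}_{J_3}J_2=-J_1$, so on the plane spanned by $J_1,J_2$ the operator $\mathrm{ad}_{J_3}$ is a $90^\circ$ rotation generator; hence $\exp(\tfrac{\pi}{2}J_3)\,J_1\,\exp(-\tfrac{\pi}{2}J_3)=J_2$, and analogously for the other pairs, so $J_1,J_2,J_3$ are pairwise similar. For (ii), periodicity forces every eigenvalue of $J_i$ to lie in $\mathrm{i}\mathbb{Z}$, because if $J_iv=\lambda v$ then $v=\exp(2\pi J_i)v=\mathrm{e}^{2\pi\lambda}v$. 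On the complexification, define the ladder operators $L_\pm=J_1\pm\mathrm{i}J_2$; the brackets give $[J_3,L_\pm]=\mp\mathrm{i}L_\pm$, so $L_\pm$ shift the $J_3$-eigenvalue by $\mp\mathrm{i}$. Let $\mathrm{i}k$ be the eigenvalue of $J_3$ of largest imaginary part and $v_0$ a corresponding eigenvector; then $L_-v_0=0$, and repeatedly applying $L_+$ produces eigenvectors with eigenvalues $\mathrm{i}k,\mathrm{i}(k-1),\ldots$ which must terminate. Introducing the Casimir $C=J_1^2+J_2^2+J_3^2$, which commutes with every $J_i$ (a direct check from the brackets) and thus acts as a scalar on the cyclic submodule generated by $v_0$, together with the algebraic identity $L_-L_+=C-J_3^2+\mathrm{i}J_3$, pins the termination to occur precisely at eigenvalue $-\mathrm{i}k$. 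Decomposing the whole representation into such cyclic submodules and taking the union of their spectra yields the set $\{-k\mathrm{i},\ldots,k\mathrm{i}\}$, where $k$ is the largest top weight appearing.

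\textbf{Main obstacle.} The genuinely delicate point is the no-gap claim inside (ii): ladder operators alone only produce chains of $J_3$-eigenvalues spaced by $\mathrm{i}$, and nothing a priori guarantees that the chain built from a top weight $\mathrm{i}k$ descends all the way to $-\mathrm{i}k$ without stalling earlier. This is the classical $\mathfrak{sl}_2$ highest-weight argument, and I expect the bulk of the careful calculation to lie in verifying the scalar identity for $L_-L_+$ on the cyclic submodule and deducing inductively that $L_+^{j}v_0\ne 0$ for every $0\le j\le 2k$ while $L_+^{2k+1}v_0=0$.
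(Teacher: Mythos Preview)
Your proposal is correct and shares its skeleton with the paper's proof: both establish (i) via the conjugation identity $\exp(\tfrac{\pi}{2}J_3)\,J_1\,\exp(-\tfrac{\pi}{2}J_3)=J_2$, both extract integrality of eigenvalues from the periodicity hypothesis, and both use the ladder operators $J_1\pm\mathrm{i}J_2$ to move along the spectrum of $J_3$.

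The substantive difference is in how the chain is anchored and terminated. You start from a highest-weight vector (eigenvalue $\mathrm{i}k$ of maximal imaginary part), bring in the Casimir $C=J_1^2+J_2^2+J_3^2$, and use the identity $L_-L_+=C-J_3^2+\mathrm{i}J_3$ to force the chain to run exactly from $\mathrm{i}k$ down to $-\mathrm{i}k$; this is the textbook $\mathfrak{sl}_2(\mathbb{C})$ argument. The paper never introduces the Casimir. Instead it starts from a basis of the null space of $J_3$, grows each null vector $v_j$ into a symmetric string $\{-n_j\mathrm{i},\ldots,n_j\mathrm{i}\}$ by applying $J_-$ and then complex conjugation, and finishes with a separate ``completeness'' counting step: any stray eigenvalue would, under iterated $J_-$, manufacture an extra null vector of $J_-$, contradicting that $J_-$ and $J_3$ have the same nullity. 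Your route is cleaner and gets the chain length in one algebraic stroke; the paper's route is more elementary (no Casimir) but pays for it with the extra bookkeeping. Your direct handling of (iii) by $v\mapsto\bar v$ is also tidier than the paper's, which obtains the $\lambda\leftrightarrow-\lambda$ symmetry only implicitly through the symmetric chains.

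One small remark: for the set-level claim that the spectrum equals $\{-k\mathrm{i},\ldots,k\mathrm{i}\}$, you do not actually need the full decomposition into cyclic submodules that you mention at the end---the single chain from the top weight already exhibits every value in that range, and maximality of $k$ bounds everything else. Complete reducibility (available here since the $J_i$ are skew-symmetric, so orthogonal complements of invariant subspaces are invariant) would only be needed if you wanted the finer multiplicity data.
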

\begin{restatable}{proposition}{eigvalueJ} Suppose $J_1,J_2,J_3$ satisfy the conditions in Theorem~\ref{thm:D_properties}.  Then, $\theta\hat{\omega}\cdot\vec{J}$ have the same eigenvalues as $J_i$, $\Lambda = \{ -k\theta\mathrm{i}, -(k-1)\theta\mathrm{i}, \dots, -\theta\mathrm{i}, 0, \theta\mathrm{i}, \dots, k\theta\mathrm{i}\}$
    \label{prop:eigvalue_J}
\end{restatable}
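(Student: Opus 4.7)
The plan is to reduce to Proposition~\ref{prop:eigvalue_symm} by exhibiting $\theta\hat{\omega}\cdot\vec{J}$ as a similarity transform of $\theta J_3$. Conjugation preserves the spectrum and multiplying a matrix by $\theta$ scales every eigenvalue by $\theta$, so it suffices to produce an invertible $U$ with $U J_3 U^{-1} = \hat{\omega}\cdot\vec{J}$. The natural candidate is $U = D(R)$ for any $R\in SO(3)$ with $R e_3 = \hat{\omega}$, which exists because $\hat{\omega}$ is a unit vector.

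The main step is therefore to prove the intertwining identity $D(R)\, J_3\, D(R)^{-1} = (R e_3)\cdot\vec{J}$ for every $R\in SO(3)$. I would do this by comparing one-parameter subgroups. Writing $R(t) = \exp(t\, v\cdot\vec{F})$, Equation~\ref{eq:D} gives $D(R(t)) = \exp(t\, v\cdot\vec{J})$, so the two curves $g(t) := D(R(t))\,J_3\, D(R(t))^{-1}$ and $h(t) := (R(t) e_3)\cdot\vec{J}$ both start at $J_3$. Differentiating, $g'(t) = [v\cdot\vec{J},\, g(t)]$. A direct check on the matrices $F_1, F_2, F_3$ shows that $v\cdot\vec{F}$ acts on $\mathbb{R}^3$ as $u\mapsto v\times u$, so $h'(t) = (v\times R(t) e_3)\cdot\vec{J}$. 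A bilinear expansion of the cyclic relations in Theorem~\ref{thm:D_properties} yields the auxiliary identity $[u_1\cdot\vec{J},\, u_2\cdot\vec{J}] = (u_1\times u_2)\cdot\vec{J}$, which lets us rewrite $h'(t) = [v\cdot\vec{J},\, h(t)]$. Uniqueness of solutions to linear matrix ODEs then forces $g\equiv h$, and since every $R\in SO(3)$ is the exponential of some $v\cdot\vec{F}$, the intertwining identity holds across the whole group.

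The main obstacle is the algebraic lemma $[u_1\cdot\vec{J},\, u_2\cdot\vec{J}] = (u_1\times u_2)\cdot\vec{J}$: once this is in hand, the ODE matching is routine. Its proof is itself mechanical, reducing by bilinearity to the three cyclic commutators $[J_i,J_j]=J_k$ guaranteed by Theorem~\ref{thm:D_properties}, but it is the one place where the specific Lie-algebra structure of $\{J_1,J_2,J_3\}$ is genuinely used. Applying the intertwining identity with $R e_3 = \hat{\omega}$ then gives $D(R)\, J_3\, D(R)^{-1} = \hat{\omega}\cdot\vec{J}$, so $\hat{\omega}\cdot\vec{J}$ has the same eigenvalues with multiplicities as $J_3$; multiplying by $\theta$ produces the claimed multiset $\{-k\theta\mathrm{i},\ldots,k\theta\mathrm{i}\}$.
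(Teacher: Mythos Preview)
Your proposal is correct and shares the paper's overall strategy: exhibit $\hat{\omega}\cdot\vec{J}$ as $D(R)\,J_3\,D(R)^{-1}$ for some $R$ with $Re_3=\hat{\omega}$, then invoke Proposition~\ref{prop:eigvalue_symm} and scale by $\theta$. The difference lies in how the intertwining identity is established. The paper first computes $\exp(\gamma J_3)\,J_i\,\exp(-\gamma J_3)$ explicitly as a cosine/sine combination of $J_1,J_2$ (recycling the series expansion from Lemma~\ref{lem:eigval_share}), repeats this for each coordinate axis, and then composes the three conjugations via an Euler-angle factorisation $R=R_x(\alpha)R_y(\beta)R_z(\gamma)$, invoking the compatibility $D(R_1R_2)=D(R_1)D(R_2)$ that was proved earlier through the Baker--Campbell--Hausdorff machinery. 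Your ODE-matching argument bypasses both the Euler decomposition and the compatibility lemma by working directly with a single one-parameter subgroup and using surjectivity of the exponential on $SO(3)$. The trade-off: the paper's route reuses lemmas already in place, while yours is more self-contained and arguably cleaner, needing only the bilinear commutator identity $[u_1\cdot\vec{J},\,u_2\cdot\vec{J}]=(u_1\times u_2)\cdot\vec{J}$ (which the paper also records, in Lemma~\ref{lem:BCH_independentness}) together with standard ODE uniqueness.
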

\begin{restatable}{theorem}{axisrotation}
    Suppose $D$ satisfies the conditions in Equation~\ref{eqn:D_condition}, and $J_1,J_2,J_3$ satisfy the conditions in Theorem~\ref{thm:D_properties}.  $\forall R\in SO(3)$ whose angle of rotation is $\theta$ and rotation axis is $\hat{\omega}$, we have
    \begin{equation*}
D(R) = \exp(\theta\hat{\omega} \cdot \vec{J})
= \sum_{\lambda \in \Lambda} \vec{b}_\lambda{\vec{b}_\lambda}^T(\sin(\lambda\theta)+\mathrm{i}\cos(\lambda\theta))
\end{equation*}
where $\vec{b}_\lambda$ is the eigenvector of $\theta \hat{\omega} \cdot \vec{J}$ that corresponds to eigenvalue $\lambda$.
\label{thm:axis_rotation}
\end{restatable}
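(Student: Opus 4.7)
The plan is to reduce the claim to a spectral decomposition of the real skew-symmetric matrix $M := \theta\hat{\omega}\cdot\vec{J}$. By Theorem~\ref{thm:D_properties} each $J_i$ satisfies $J_i^T=-J_i$, so $M$ is a real linear combination of real skew-symmetric matrices and is therefore itself real skew-symmetric; in particular $M$ is normal, so the complex spectral theorem produces an orthonormal eigenbasis $\{\vec{b}_\lambda\}_{\lambda\in\Lambda}$ that diagonalizes it as $M = \sum_{\lambda}\lambda\,\vec{b}_\lambda\vec{b}_\lambda^{*}$. By Proposition~\ref{prop:eigvalue_J} the $\lambda$ appearing here are precisely the purely imaginary numbers listed in $\Lambda$.

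Next, I would plug this decomposition into the Taylor series defining the matrix exponential. Since the rank-one projectors $\vec{b}_\lambda\vec{b}_\lambda^{*}$ are mutually orthogonal and sum to the identity, a short induction on $m$ yields $M^m = \sum_\lambda \lambda^m\,\vec{b}_\lambda\vec{b}_\lambda^{*}$, and summing the Taylor expansion term by term gives
\begin{equation*}
\exp(M) \;=\; \sum_{\lambda\in\Lambda} e^{\lambda}\,\vec{b}_\lambda\vec{b}_\lambda^{*}.
\end{equation*}
Because every $\lambda$ is purely imaginary, Euler's identity converts each $e^{\lambda}$ into the trigonometric form that, after the paper's rearrangement, produces the factor $\sin(\lambda\theta)+\mathrm{i}\cos(\lambda\theta)$ displayed on the right-hand side of the theorem.

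The main obstacle is the final bookkeeping step that reconciles the conjugate-transpose form $\vec{b}_\lambda\vec{b}_\lambda^{*}$ delivered by the spectral theorem with the transpose form $\vec{b}_\lambda\vec{b}_\lambda^{T}$ written in the statement, and that certifies the resulting sum is real (as required since $D(R)\in SO(n)$). Because $M$ is real, its nonzero eigenvalues come in conjugate pairs $\lambda,-\lambda$ with eigenvectors $\vec{b}_{-\lambda}=\overline{\vec{b}_\lambda}$; the two paired summands $e^{\lambda}\vec{b}_\lambda\vec{b}_\lambda^{*}$ and $e^{-\lambda}\vec{b}_{-\lambda}\vec{b}_{-\lambda}^{*}$ then combine into an outer product that can be rewritten without conjugation on the eigenvector. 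Carrying out this pairing carefully and tracking the sign conventions on $\lambda$ is what produces the exact expression quoted in the theorem, after which no further computation is required.
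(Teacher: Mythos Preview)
Your proposal is correct and follows essentially the same approach as the paper: diagonalize the real skew-symmetric matrix $\theta\hat{\omega}\cdot\vec{J}$ using its eigenbasis (the paper writes this as $P\boldsymbol{\Lambda}P^{-1}$), push the exponential through to the diagonal, and apply Euler's identity. Your discussion of the $\vec{b}_\lambda\vec{b}_\lambda^{*}$ versus $\vec{b}_\lambda\vec{b}_\lambda^{T}$ bookkeeping is in fact more careful than the paper's own derivation, which simply writes $P\exp(\boldsymbol{\Lambda})P^{-1}=\sum_i \vec{b}_i\vec{b}_i^{T}(\sin(k_i\theta)+\mathrm{i}\cos(k_i\theta))$ without commenting on the conjugation or the $\sin$/$\cos$ ordering.
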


Note the correlation between the frequency of $D(R)$ and the magnitude of $\lambda$. If we have a large $\lambda$, then $D(R)$ consists of sinusoids with large magnitude, and vice-versa. The following proposition gives us insight as to why high-dimensional features are necessary.
\begin{restatable}{proposition}{propkbound}
    \label{prop:k_bound}
     The maximum value of $k$ in Proposition~\ref{prop:eigvalue_J} is $\lfloor \frac{n-1}{2} \rfloor$.
\end{restatable}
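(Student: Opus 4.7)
The plan is to leverage Proposition~\ref{prop:eigvalue_symm}, which tells us that each $J_i$ has the distinct eigenvalues
\[
\Lambda = \{-k\mathrm{i},\, -(k-1)\mathrm{i},\, \dots,\, -\mathrm{i},\, 0,\, \mathrm{i},\, \dots,\, k\mathrm{i}\}.
\]
First I would simply count that this set contains exactly $2k+1$ distinct complex numbers, and emphasize that by the statement of Proposition~\ref{prop:eigvalue_symm} each of them is an eigenvalue of $J_i$ (the set is the actual spectrum of $J_i$, not merely a superset).

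Next I would invoke the elementary fact that an $n\times n$ matrix can have at most $n$ distinct eigenvalues, since its characteristic polynomial has degree $n$. Applying this to $J_i\in\mathbb{R}^{n\times n}$ yields $2k+1\leq n$, hence $k\leq \frac{n-1}{2}$. Because $k$ is a non-negative integer (again by Proposition~\ref{prop:eigvalue_symm}), this tightens to $k\leq \lfloor \frac{n-1}{2}\rfloor$, which is the claimed upper bound.

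To justify the word \emph{maximum} one must also show the bound is attained. I would do this by exhibiting an explicit triple $(J_1,J_2,J_3)$ satisfying the hypotheses of Theorem~\ref{thm:D_properties} for which $k=\lfloor \frac{n-1}{2}\rfloor$. The natural candidate is the standard $(2k+1)$-dimensional irreducible representation of $\mathfrak{so}(3)$: for odd $n=2k+1$ it fits exactly, and for even $n=2k+2$ one takes the direct sum with the trivial one-dimensional representation to absorb the extra dimension while preserving the bracket identities, skew-symmetry, and the periodicity $\exp(2m\pi J_i)=I$. This attainability is precisely what the explicit construction algorithm referenced later in the paper will realize.

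I do not expect a serious obstacle here: the upper bound is essentially a one-line pigeonhole-style deduction from Proposition~\ref{prop:eigvalue_symm}. The only subtle point requiring care is to ensure that every value listed in $\Lambda$ is genuinely attained as an eigenvalue, rather than merely being a candidate value; if $\Lambda$ were only a superset of the spectrum, the distinct-eigenvalue count could drop below $2k+1$ and the argument would collapse. Thus verifying that the preceding proposition is stated as an equality of sets (not an inclusion) is the one step I would want to double-check before writing the proof down.
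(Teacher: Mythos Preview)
Your proposal is correct and follows essentially the same counting argument as the paper: both observe that the spectrum $\{-k\mathrm{i},\dots,0,\dots,k\mathrm{i}\}$ from Proposition~\ref{prop:eigvalue_symm} contains $2k+1$ values which must fit into the $n$ eigenvalues (with multiplicity) of an $n\times n$ matrix, yielding $k\le\lfloor\tfrac{n-1}{2}\rfloor$. You are in fact more careful than the paper about attainability, explicitly invoking the irreducible $\mathfrak{so}(3)$ representation (plus a trivial summand for even $n$), whereas the paper simply asserts the maximum is reached and defers realization to its construction algorithm.
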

Therefore, for all $R\in SO(3)$, the maximum frequency $D(R)$ can have is $\lfloor \frac{n-1}{2} \rfloor / 2\pi$. This shows that if our feature space is say, dimension 3, then the maximum frequency it can attain is just $1/2\pi$.

Finally, we have the following equivariance theorem for \ours{}.
\begin{restatable}{theorem}{theorempsiequiv}
    Consider the mapping $\psi(\vec{u}) = ||\vec{u}||D(R^z(\hat{u}))\hat{e}$. $\psi$  is rotational equivariant if $\hat{e}$ is the eigenvector corresponding to the zero eigenvalue of $J_3$, and $D$ satisfies all the conditions in equation~\ref{eqn:D_condition}. 
\label{thm:psi_equiv}
\end{restatable}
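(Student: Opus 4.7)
\textbf{Proof Plan for Theorem~\ref{thm:psi_equiv}.}

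The goal is to show $\psi(R\vec{u}) = D(R)\psi(\vec{u})$ for every $R\in SO(3)$ and $\vec{u}\in\mathbb{R}^3$. The plan proceeds in three stages: establishing that $\psi$ is well-defined, proving $D$ is a group homomorphism $SO(3)\to SO(n)$, and combining these to conclude equivariance.

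\textbf{Step 1: Well-definedness of $\psi$.} The map $R^z(\hat{u})$ rotating $\hat{z}$ to $\hat{u}$ is not unique: any two such rotations differ by a rotation $R_z(\alpha)$ about $\hat{z}$, i.e.\ $R^z_1(\hat{u}) = R^z_2(\hat{u})\,R_z(\alpha)$. I would first show that this ambiguity does not affect $\psi$. Using the construction $D(R_z(\alpha)) = \exp(\alpha J_3)$ together with the assumption that $\hat{e}$ is the zero-eigenvector of $J_3$, I get $J_3\hat{e}=\vec{0}$ and hence $\exp(\alpha J_3)\hat{e}=\hat{e}$. Combined with the homomorphism property (Step 2), this gives $D(R^z_1(\hat{u}))\hat{e}=D(R^z_2(\hat{u}))D(R_z(\alpha))\hat{e}=D(R^z_2(\hat{u}))\hat{e}$, so $\psi$ is well-defined.

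\textbf{Step 2: $D$ is a homomorphism.} This is the step I expect to be the main obstacle, since the construction $D(R)=\exp(\theta\hat{\omega}\cdot\vec{J})$ only evidently agrees with multiplication for rotations sharing a common axis. I would argue as follows: Theorem~\ref{thm:D_properties} shows $\{J_1,J_2,J_3\}$ satisfy the bracket relations of $\mathfrak{so}(3)$, i.e.\ they form a Lie algebra representation $d\rho:\mathfrak{so}(3)\to\mathfrak{so}(n)$ via $\hat{w}\cdot\vec{F}\mapsto\hat{w}\cdot\vec{J}$. Because $SO(3)$ is connected and simply connected after passing to its universal cover, and the periodicity condition $\exp(2m\pi J_i)=I$ ensures the induced Lie algebra map integrates to a well-defined group homomorphism on $SO(3)$ itself (not merely its cover), the map $\rho(R):=D(R)=\exp(\theta\hat{\omega}\cdot\vec{J})$ satisfies $D(R_1R_2)=D(R_1)D(R_2)$. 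Uniqueness of this integration (one element of $SO(3)$ maps to one element of $SO(n)$) is guaranteed by the second and third conditions of Equation~\ref{eqn:D_condition}.

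\textbf{Step 3: Assembling equivariance.} Writing $\vec{v}=R\vec{u}$, I have $\|\vec{v}\|=\|\vec{u}\|$ and $\hat{v}=R\hat{u}$. Observe that $R\cdot R^z(\hat{u})$ is a rotation sending $\hat{z}$ to $R\hat{u}=\hat{v}$, so by Step~1 we may take $R^z(\hat{v})=R\cdot R^z(\hat{u})$ without loss of generality. Then
\begin{align*}
\psi(R\vec{u})
&= \|\vec{u}\|\,D\bigl(R\cdot R^z(\hat{u})\bigr)\hat{e}
= \|\vec{u}\|\,D(R)\,D\bigl(R^z(\hat{u})\bigr)\hat{e}
= D(R)\,\psi(\vec{u}),
\end{align*}
using the homomorphism property from Step~2. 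This establishes rotational equivariance. The decisive ingredients are the zero-eigenvector choice of $\hat{e}$ (which kills the ambiguity in $R^z$) and the homomorphism property (which transports the outer rotation $R$ through $D$); without either, equivariance fails.
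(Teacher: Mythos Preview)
Your argument is correct and structurally matches the paper: both write $R^z(R\hat u)=R\,R^z(\hat u)\,R_z(\gamma)$ for some $\gamma$, apply $D(R_1R_2)=D(R_1)D(R_2)$, and kill the trailing $D(R_z(\gamma))=\exp(\gamma J_3)$ acting on $\hat e$ via the zero-eigenvector hypothesis (you phrase this as a well-definedness check up front, the paper absorbs it at the end of the computation, but the content is identical). The one genuine methodological difference is your Step~2. The paper proves the homomorphism property (its Lemma~\ref{lem:compatibility}) by a direct Baker--Campbell--Hausdorff computation: since $\{F_i\}$ and $\{J_i\}$ satisfy identical bracket relations, the BCH series gives $\exp(\vec a\cdot\vec J)\exp(\vec b\cdot\vec J)=\exp(g(\vec a,\vec b)\cdot\vec J)$ with the \emph{same} dimension-independent coefficient function $g$ as in $SO(3)$, and uniqueness of $D$ on each $R$ closes the loop. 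Your Lie-theoretic route---integrate the $\mathfrak{so}(3)$-representation to the simply connected cover and then descend via the periodicity condition $\exp(2\pi J_i)=I$---is equally valid and more conceptual, making transparent why the commutator and periodicity hypotheses of Theorem~\ref{thm:D_properties} are exactly what compatibility requires; the paper's BCH approach is more elementary and self-contained, needing no covering-space machinery.
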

The reason that we have the condition for $\hat{e}$ is that we are using the angle from the z-axis in 3D space to $\hat{u}$, $R^z(\hat{u})$, to express its orientation. Note that we could have chosen instead to use $R^x(\hat{u})$ or $R^y(\hat{u})$ in $\psi$. In such cases, we would have to choose $\hat{e}$ as an eigenvector with the zero eigenvalues of $J_1$ or $J_2$ respectively.

We now have all the ingredients for constructing $D$. We will ensure to construct $D$ to satisfy all the conditions in Theorem~\ref{thm:D_properties}, and $\lfloor \frac{n-1}{2} \rfloor \mathrm{i}$ is included in $\vec{J}$ so that it can capture the maximum frequency with the given dimension $n$. Algorithm~\ref{alg:sample_Js} gives a pseudocode for doing this.

\begin{minipage}[t]{0.49\textwidth}
\begin{algorithm}[H]
\caption{\textsc{Construct}\ $J_1, J_2, J_3$}
\begin{algorithmic}[1]
\Require $n$
\State $J_3 \gets \text{\textsc{Sample}}J_3(n)$
\State $\mathcal{J}_1, \mathcal{J}_2 \gets$ \text{\textsc{CreateSearchSpace($J_3$)}}
\State $J_1, J_2 \gets \argminF_{J_1 \in \mathcal{J}_1, J_2 \in \mathcal{J}_2} \lVert [J_1, J_2]-J_3 \rVert_F^2$
\State \Return $[J_1, J_2, J_3]$
\end{algorithmic}
\label{alg:sample_Js}
\end{algorithm}
\end{minipage}
\hfill
\begin{minipage}[t]{0.49\textwidth}
\begin{algorithm}[H]
\caption{\textsc{sample}$J_3$}
\begin{algorithmic}[1]
\Require $n$
\State $A \gets$ randomly fill $\mathbb{R}^{n\times n}$ matrix
\State $J_3 \gets A - A^T$
\State $U \gets \textsc{GetEigenVectors}(J_3)$
\State $\Lambda = \text{\textsc{diagonal}}\left(\left[-\lfloor\frac{n-1}{2}\rfloor \mathrm{i}, \dots, \lfloor\frac{n-1}{2}\rfloor \mathrm{i}\right]\right)$
\State $J_3 \gets U\Lambda U^*$
\State \Return $J_3$
\end{algorithmic}
\label{alg:sample_J3}
\end{algorithm}
\end{minipage}%


Given $n$, the algorithm begins by constructing $J_3$ by calling \textsc{sample$J_3$}. Algorithm~\ref{alg:sample_J3} begins with a construction of a random skew-symmetric matrix (L1-2), and set its eigenvalues to $ \Lambda = \left[-\lfloor \frac{n-1}{2} \rfloor\mathrm{i}, \dots, \lfloor \frac{n-1}{2} \rfloor\mathrm{i}\right]$ while keeping the eigenvectors unchanged (L3-5). Given that the returned $J_3$ has eigenvalues $\Lambda$ and is skew-symmetric, we have that $\exp(2k\pi J_3)=I_{n\times n}$ by Proposition~\ref{prop:periodicity} in the appendix. 

We then return to Algorithm~\ref{alg:sample_Js} to construct $J_1$ and $J_2$, such that they satisfy conditions in Theorem~\ref{thm:D_properties}. We now want to find $J_1$ and $J_2$ such that they satisfy
\begin{equation}
[J_3,J_1]=J_2, \ [J_2,J_3]=J_1, \ J_1^T=-J_1 \ [J_1,J_2]=J_3, 
\label{eq:three_conditions_alg1}
\end{equation}
Note that if we satisfy these conditions, $J_2+J_2^T=0$ would be automatically satisfied. To find $J_1$ and $J_2$ that satisfy them, we first define the space of $J_1$ and $J_2$ that satisfies the first three conditions, by realizing that these equations define an under-determined linear system of equations whose unknowns are the elements of $J_1$ and $J_2$ (for details, see Appendix~\ref{app:detail_J123}) (L2). Then, to satisfy the last condition, we solve the non-linear optimization problem using the Cross-Entropy Method (CEM)~\citep{de2005tutorial} (L3).

\section{Experiment}
We conduct our experiments on both SO(3)-invariant task and SO(3)-equivariant task. We use three tasks adopted from~\cite{deng2021vector}: point cloud classification (invariant), segmentation (invariant), and point cloud completion (equivariant in the encoder, invariant in the decoder). Further, we evaluate them on three more SO(3) equivariant tasks: shape compression, the normal estimation, and point cloud registration, adopted from \cite{mescheder2019occupancy}, \cite{puny2021frame}, and \cite{zhu2022correspondence}. In all subsections, we call our approach \ours-\textsc{vn}.

\newpage
\subsection{Point cloud completion}
\label{sec:implicit_reconstruction}
\begin{wraptable}{r}{0.51\textwidth}
\vspace{-3mm}
\centering
\resizebox{0.51\textwidth}{!}
{
    \begin{tabular}{c|ccc}
    Method                              & I/I           & I/SO(3)       & SO(3)/SO(3)   \\ \hline
    OccNet (\cite{mescheder2019occupancy}) & 71.4          & 30.9          & 58.2          \\ \hline
    VN-OccNet (\cite{deng2021vector})      & 69.3          & 69.3          & 68.8          \\
    \ours{}-\textsc{vn}-OccNet (Ours)                & \textbf{71.9} & \textbf{71.9} & \textbf{71.9} \\ \hline
    \end{tabular}
}
\caption{Volumetric mIoU on ShapeNet reconstruction with neural implicit. These results are average category mean IoU over 9 classes. $I/I$ indicates train and test with canonical poses, $I/SO(3)$ indicates train with canonical poses and test with different $SO(3)$ rotations, and $SO(3)/SO(3)$ indicates train and test with different $SO(3)$ rotations.  Bold is the best performance.}\vspace{-3mm}
\label{table:ShapeNet_occ}
\end{wraptable}

The objective here is to reconstruct the shape, expressed with a neural implicit function, from a partial and noisy input point cloud. The architecture and training details can be found in Appendix \ref{app:architecture_details_occnet}.

\textbf{Dataset:} We use ShapeNet consisting of 13 major classes, following the categorization in \cite{deng2021vector}. The input point cloud $P$ is comprised of 300 points, sampled from each model's surface and perturbed with noise $\epsilon \sim N(0, 0.005)$. Query points for ShapeNet are uniformly sampled within the Axis-Aligned Bounding Box (AABB).



\textbf{Results:} Table \ref{table:ShapeNet_occ} shows the volumetric mean IoU for reconstruction. As already noted in the original paper~\citep{deng2021vector}, VN-OccNet performs worse than OccNet at canonical poses. \ours{}-\textsc{vn}-OccNet on the other hand, outperforms both approaches by utilizing frequency-based features. This is evident in Figure~\ref{fig:VN_EVN_recon}, where, unlike Vector Neurons, our method captures details present in cars (wheels, side mirrors) and chair's legs. We further demonstrate that FER enhances model robustness across various sample sizes, detailed in Appendix \ref{app:point_completion}.

\begin{figure}[htb]
\centering
\includegraphics[width=0.98\textwidth]{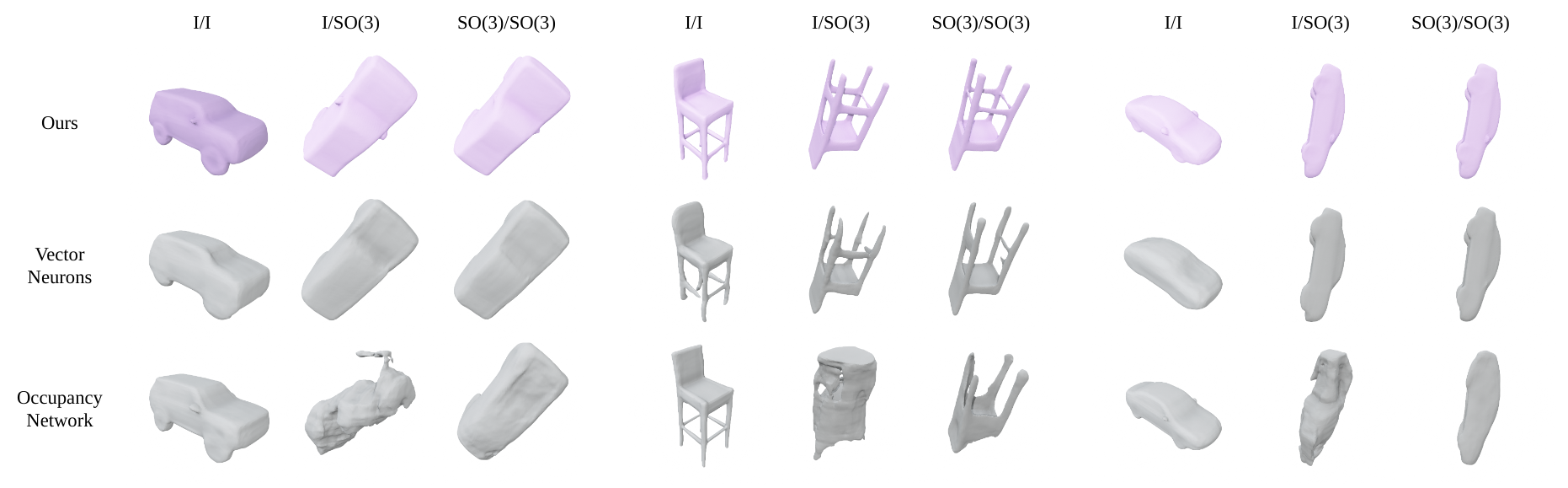}
\caption{Reconstructions of meshes from point cloud inputs across three models: the original OccNet~\cite{mescheder2019occupancy} (bottom), VN-OccNet~\cite{deng2021vector} (middle), and our proposed model (top).} 
\label{fig:VN_EVN_recon}
\end{figure}

\subsection{Shape compression}
Here, the task is to compress a 3D shape into an embedding, and then reconstruct the same shape when just given an embedding. The purpose of this experiment is to investigate how well different models capture details, and in particular, whether our network can learn to capture high-frequency details and compress them to an embedding.

\textbf{Dataset:} We use EGAD dataset~\citep{morrison2020egad} comprised of 2281 shapes. EGAD dataset provides the categorization of the shapes by shape complexity level from 0 to 25, measured by the distribution of curvature across vertices. We train our model on the given dataset, and save the embeddings. We then reproduce original shapes based on just the embeddings. We use OccNet as the basis model. The training point cloud $P$ is sampled following the strategy in section~\ref{sec:implicit_reconstruction}. The query points are sampled both from the surface and AABB at a 9:1 ratio, with surface points perturbed by the noise $\delta \sim N(0, 0.025)$. For all models, we train with various different rotations.

\textbf{Results:} Figure~\ref{fig:VN_EVN_plot_egad} shows the results. As we can see, as the complexity of the shape increases, the performances of VN drop significantly, while that of ours drops at a slower rate, which again demonstrates the effectiveness of our frequency-based representation. Figure \ref{fig:VN_EVN_recon_egad} shows the qualitative result at the canonical pose. Our model is able to reconstruct the high-frequency details of the shapes, while VN-OccNet and standard OccNet that use coordinate-based inputs smooth out or miss the detail, again showing the effectiveness of the frequency-based representation.
Dimensional analysis reveals that FER enhances detail accuracy with reduced inference impact, detailed in Appendices \ref{app:FER_dimensional_analysis}, \ref{app:shape_compression_different_dim}, and \ref{app:sherical_shape_regression}. Additionally, we demonstrate improved IoU of our model with consistent compression ratios in Appendix \ref{app:shape_compression_identical_ratio} and report advancements over a recent VN-based model in Appendix \ref{app:FER_graphonet}.

 \begin{figure}
     \centering
     \includegraphics[width=1.0\textwidth]{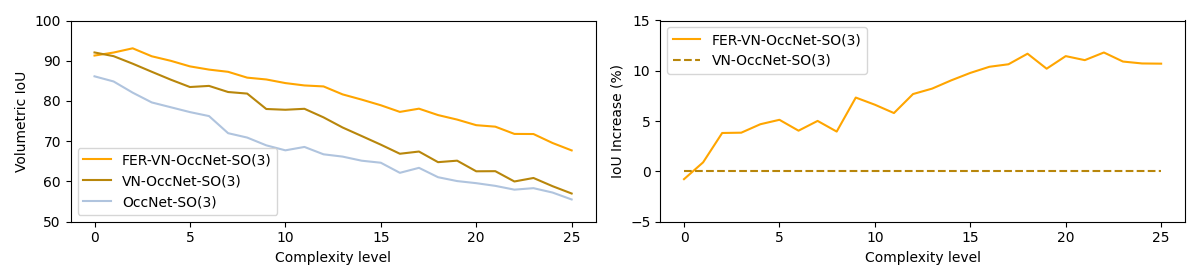}
     \caption{The left graph shows the volumetric IoU of OccNet, VN-OccNet, and \ours-\textsc{vn}-OccNet across the complexity level in the EGAD training set. We apply rotational augmentation during both training and test time. The right graph shows \ours-\textsc{vn}-OccNet's IoU improvement over VN-OccNet.}
     \label{fig:VN_EVN_plot_egad}
 \end{figure}

\begin{wraptable}{r}{0.49\textwidth}
\centering
\vspace{-4mm}
\resizebox{0.49\textwidth}{!}{
\begin{tabular}{c|cc|cc}
dataset                           & \multicolumn{2}{c|}{ShapeNet}                        & \multicolumn{2}{c}{ModelNet40}                       \\ \hline
train/test data                   & \multicolumn{1}{c|}{I/SO(3)}        & SO(3)/SO(3)    & \multicolumn{1}{c|}{I/SO(3)}        & SO(3)/SO(3)    \\ \hline
PointNet (\cite{qi2017pointnet})    & \multicolumn{1}{c|}{0.214}          & 0.294          & \multicolumn{1}{c|}{0.141}          & 0.112          \\
VN-PointNet (\cite{deng2021vector}) & \multicolumn{1}{c|}{0.216}          & 0.219          & \multicolumn{1}{c|}{0.150}          & 0.148          \\
FA-PointNet (\cite{puny2021frame})  & \multicolumn{1}{c|}{0.208}          & 0.215          & \multicolumn{1}{c|}{0.152}          & 0.156          \\
\ours{}-\textsc{vn}-PointNet (Ours)           & \multicolumn{1}{c|}{0.196}          & 0.198          & \multicolumn{1}{c|}{0.125}          & 0.126          \\
DGCNN (\cite{wang2019dynamic})      & \multicolumn{1}{c|}{0.212}          & 0.170          & \multicolumn{1}{c|}{0.201}          & 0.109          \\
VN-DGCNN (\cite{deng2021vector})    & \multicolumn{1}{c|}{0.152}          & 0.152          & \multicolumn{1}{c|}{0.092}          & 0.088          \\
FA-DGCNN (\cite{puny2021frame})     & \multicolumn{1}{c|}{0.150}          & 0.153          & \multicolumn{1}{c|}{0.083}          & 0.082          \\
\ours{}-\textsc{vn}-DGCNN (Ours)              & \multicolumn{1}{c|}{\textbf{0.143}} & \textbf{0.142} & \multicolumn{1}{c|}{\textbf{0.078}} & \textbf{0.079}
\end{tabular}
}

\caption{Test normal estimation results on the ShapeNet and ModelNet40 dataset. Numbers indicate the evaluation metric adopted from~\cite{puny2021frame} $1-( \bold{n}^T \hat{\bold{n}})^2$ where is prediction $\hat{\bold{n}}$ and $\bold{n}$ is ground-truth. Bold is the best performance.}\vspace{-10mm}
\label{table:normal}
\end{wraptable}

\subsection{Normal Estimation}
The goal of this equivariant task is predicting the normal direction of a point cloud. The ModelNet40~\citep{wu20153d} and ShapeNet~\citep{chang2015shapenet} datasets are used. For each object, we sampled 512 random surface points. As shown in Table~\ref{table:normal}, our method outperforms both Vector Neurons and a preceding rotation-equivariant method by~\cite{puny2021frame}.

\subsection{Point cloud registration}
The objective of point cloud registration is to align two sets of points $P_1$ and $P_2$ that come from the same shape. Registration algorithms such as ICP typically try to find the correspondences between the points in $P_1$ and $P_2$ but they are prone to noise. \cite{zhu2022correspondence} proposes a different approach, which solves for the latent codes $Z_1, Z_2 \in \mathbb{R}^{C \times 3}$ using VNs. Following this, we use the encoders of various models from the shape completion task (section \ref{sec:implicit_reconstruction}), find the orientation in the feature space, and from this determine $R$. The details can be found in Appendix \ref{app:architecture_details_reg}.

\textbf{Dataset:} We use ShapeNet~\cite{chang2015shapenet}. We generate two point clouds sampled from the same shape, with one being randomly rotated from its canonical pose. Unlike the studies by ~\cite{zhu2022correspondence} and \cite{yuan2020deepgmr}, which employ the ModelNet40 dataset and generate denser point clouds of 1,000 or 500 points, we opt for a sparser point cloud of 300 points using ShapeNet. Our experiments feature three test configurations: 1) $P_2$ is created by copying $P_1$ (Copy), 2) $P_2$ consists of different points from the same shape from $P_1$ (Distinct sample); 3) $P_2$ consists of a different set of points and density from $P_1$ where the number of points for  $P_2$ varies between 50 and 300 points(Varying density). We evaluate our methodology against three baseline approaches: VN-EquivReg \citep{zhu2022correspondence}, DeepGMR \citep{yuan2020deepgmr} following the setting of \citep{zhu2022correspondence}, and Iterative Closest Point (ICP)\citep{121791}. Due to the symmetrical characteristics of ShapeNet objects and the absence of point correspondence information, we use Chamfer Distance (CD) as our evaluation metric and skip the training with pose error in \cite{zhu2022correspondence}. 

\textbf{Results:} Our hypothesis posits that our feature's ability to capture input details results in consistent feature encoding and more robust point cloud registration. Table~\ref{table:point_registration} validates this hypothesis. While all learning-based methods are accurate in recovering the original rotation when a rotated copy is provided, our in the other two settings where we use different point samples and have different density in $P_2$ from $P_1$. Figure \ref{fig:point_registration_samples} shows the qualitative results. VN-EquivReg struggles to distinguish between similar features, like an airplane's tail and head, or the shade and base of a lamp. In contrast, our method successfully captures finer details, such as the airplane's tailplane and the lamp's cone-shaped shade. We additionally illustrate that FER improves model robustness against diverse initial perturbations, as described in Appendix \ref{app:point_registration}.


\begin{figure}
\begin{minipage}[b]{.5\linewidth}
    \centering
    \resizebox{1.0\textwidth}{!}
    {
        \begin{tabular}{c|c|c|c}
        Rotated point cloud                      & Copy             & Distinct sample  & Varying density \\ \hline
        ICP                                      & 0.01536          & 0.01609          & 0.02059                              \\ \hline
        DeepGMR (\cite{yuan2020deepgmr})           & \textbf{0.00000} & 0.00769          & 0.01574                              \\
        VN-EquivReg (\cite{zhu2022correspondence}) & \textbf{0.00000} & 0.00560          & 0.01077                              \\
        \ours{}-\textsc{vn}-EquivReg (Ours)                      & \textbf{0.00000} & \textbf{0.00347} & \textbf{0.00714}                     \\ \hline
        No rotation                              & 0.00000          & 0.00310          & 0.00611                           
        \end{tabular}
    }
    \captionof{table}{Registration results on the ShapeNet dataset. The metric is Chamfer Distance. Bold is the best performance.}
    \label{table:point_registration}
\end{minipage}
\hfill
\begin{minipage}[b]{.47\linewidth}
    \centering
    \includegraphics[width=1.0\textwidth]{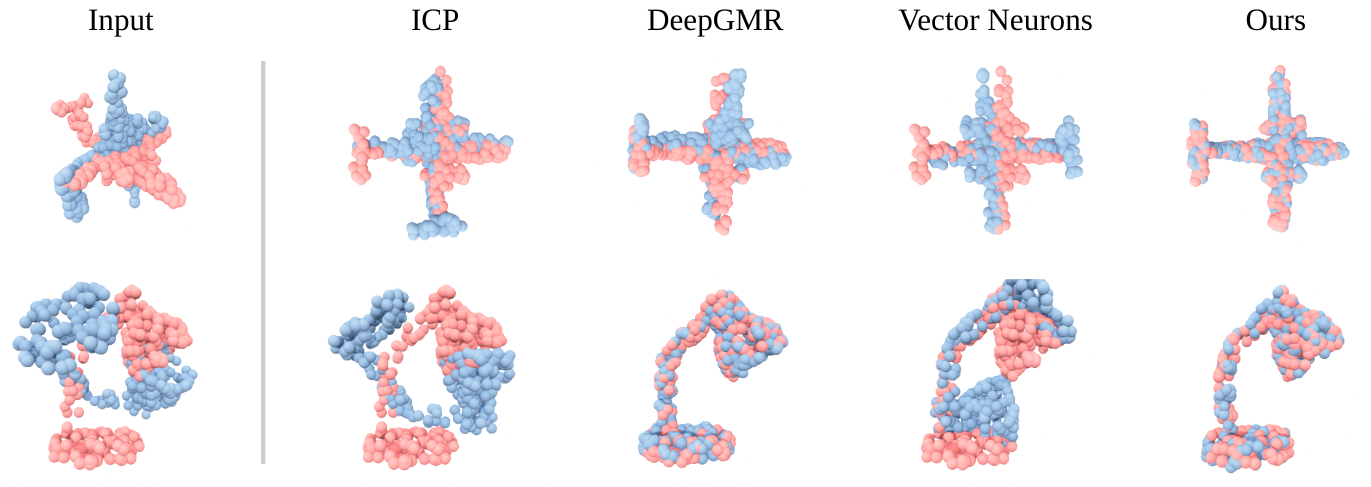}
    \captionof{figure}{Qualitative results of point cloud registration. Red is $P_1$ and blue is $P_2$ in the ``Distinct sample'' setting.}
    \label{fig:point_registration_samples}
\end{minipage}
\end{figure}

\subsection{Point Cloud Classification and Segmentation}

\noindent\textbf{Dataset:} For both tasks, we adopted the experimental data setup from \cite{deng2021vector}. The ModelNet40 dataset~\citep{wu20153d}, used for object classification, comprises 40 classes with 12,311 CAD models—9843 are designated for training and the remainder for testing. For the object part segmentation task, the ShapeNet dataset~\citep{chang2015shapenet} was used, containing over 30,000 models spanning 16 shape classes.


\noindent\textbf{Results:} The results of the object Table~\ref{table:classification} contains the results of the object classification experiment, showing the classification accuracy for three different augmentation strategies. Our methodology was benchmarked against multiple rotation-equivariant and -invariant methods. The methods are grouped by whether it is equivariant, invariant, or neither. In the case of rotation-equivariant methods, they originally maintain the orientation on the feature space but achieve invariance via an additional invariant layer. Even though our method is primarily rotation-equivariant, ours is only after the PaRINet~\citep{chen2022devil} which is specifically designed for invariance, especially in $z/SO(3)$ and $SO(3)/SO(3)$ setups.


\begin{table}
\begin{minipage}{.49\linewidth}
    \begin{center}
    \resizebox{1.0\textwidth}{!}{%
        \begin{tabular}{ccccc}
                                                                & Methods                                     & $z/z$         & $z/SO(3)$     & $SO(3)/SO(3)$ \\ \hline
                                                                & \multicolumn{4}{c}{Point / mesh inputs}                       \\ \hline
        \multirow{5}{*}{Neither}                                & PointNet (\cite{qi2017pointnet})           & 85.9          & 19.6          & 74.7          \\
                                                                & PointNet++ (\cite{qi2017pointnet++})       & 91.8          & 28.4          & 85.0          \\
                                                                & PointCNN (\cite{li2018pointcnn})           & {\ul 92.5}    & 41.2          & 84.5          \\
                                                                & DGCNN (\cite{wang2019dynamic})             & 90.3          & 33.8          & 88.6          \\
                                                                & ShellNet (\cite{zhang2019shellnet})        & \textbf{93.1} & 19.9          & 87.8          \\ \hline
        \multirow{8}{*}{\shortstack[c]{Rotation-\\equivariant}} & VN-PointNet (\cite{deng2021vector})        & 77.5          & 77.5          & 77.2          \\
                                                                & VN-DGCNN (\cite{deng2021vector})           & 89.5          & 89.5          & 90.2          \\
                                                                & \ours{}-\textsc{vn}-PointNet (Ours)                        & 88.2          & 87.8          & 88.8          \\
                                                                & \ours{}-\textsc{vn}-DGCNN (Ours)                           & 90.5          & {\ul 90.5}    & {\ul 90.5}    \\
                                                                & SVNet-DGCNN (\cite{su2022svnet})           & 90.3          & 90.3          & 90.0          \\
                                                                & TFN (\cite{Thomas2018TensorFN})            & 88.5          & 85.3          & 87.6          \\
                                                                & Spherical-CNN (\cite{esteves2018learning}) & 88.9          & 76.7          & 86.9          \\
                                                                & $a^3$S-CNN (\cite{liu2018deep})            & 89.6          & 87.9          & 88.7          \\ \hline
        \multirow{7}{*}{\shortstack[c]{Rotation-\\invariant}}   & SFCNN (\cite{rao2019spherical})            & 91.4          & 84.8          & 90.1          \\
                                                                & RI-Conv (\cite{zhang2019rotation})         & 86.5          & 86.4          & 86.4          \\
                                                                & SPHNet (\cite{poulenard2019effective})     & 87.7          & 86.6          & 87.6          \\
                                                                & ClusterNet (\cite{chen2019clusternet})     & 87.1          & 87.1          & 87.1          \\
                                                                & GC-Conv (\cite{zhang2020global})           & 89.0          & 89.1          & 89.2          \\
                                                                & RI-Framework (\cite{li2021rotation})       & 89.4          & 89.4          & 89.3          \\
                                                                & PaRINet (\cite{chen2022devil})             & 91.4          & \textbf{91.4} & \textbf{91.4} \\ \hline
        \end{tabular}%
    }
    \end{center}
    \caption{Test classification accuracy on the ModelNet40 dataset. Bold is the best performance and underlined is the next best performance.}
    \label{table:classification}
\end{minipage}%
\hfill
\begin{minipage}{.49\linewidth}
    \begin{center}
    \resizebox{1.0\textwidth}{!}{
        \begin{tabular}{cccc}
                                                                & Methods                                    & $z/SO(3)$     & $SO(3)/SO(3)$ \\ \hline
                                                                & \multicolumn{3}{c}{Point / mesh inputs}                                    \\ \hline
        \multirow{5}{*}{Neither}                                & PointNet (\cite{qi2017pointnet})           & 38.0          & 62.3          \\
                                                                & PointNet++ (\cite{qi2017pointnet++})       & 48.3          & 76.7          \\
                                                                & PointCNN (\cite{li2018pointcnn})           & 34.7          & 71.4          \\
                                                                & DGCNN (\cite{wang2019dynamic})             & 49.3          & 78.6          \\
                                                                & ShellNet (\cite{zhang2019shellnet})        & 47.2          & 77.1          \\ \hline
        \multirow{6}{*}{\shortstack[c]{Rotation-\\equivariant}} & VN-PointNet (\cite{deng2021vector})        & 72.4          & 72.8          \\
                                                                & VN-DGCNN (\cite{deng2021vector})           & 81.4          & 81.4          \\
                                                                & \ours{}-\textsc{vn}-PointNet (Ours)                        & 82.7          & 82.1          \\
                                                                & \ours{}-\textsc{vn}-DGCNN (Ours)                           & {\ul 83.4}    & {\ul 83.5}    \\
                                                                & SVNet-DGCNN (\cite{su2022svnet})           & 81.4          & 81.4          \\
                                                                & TFN (\cite{Thomas2018TensorFN})            & 76.8          & 76.2          \\ \hline
        \multirow{5}{*}{\shortstack[c]{Rotation-\\invariant}}   & RI-Conv (\cite{zhang2019rotation})         & 75.3          & 75.3          \\
                                                                & GC-Conv (\cite{zhang2020global})           & 77.2          & 77.3          \\
                                                                & RI-Framework (\cite{li2021rotation})       & 79.2          & 79.4          \\
                                                                & PaRINet (\cite{chen2022devil})             & \textbf{83.8} & \textbf{83.8} \\
                                                                & TetraSphere (\cite{melnyk2022tetrasphere}) & 82.3          & 82.1          \\ \hline
        \end{tabular}
    }
    \end{center}
    \caption{Test part segmentation results on the ShapeNet dataset. These results are average category mean IoU over 16 classes. Bold is the best performance and underlined is the next best performance.}
    \label{table:segmentation}
\end{minipage}

\end{table}

\section{conclusion}
In this work, we propose \ours{}, a frequency-based equivariant feature representation tailored for 3D data. Our approach relies on the fact that rotation matrices can be written as sinusods, whose maximum frequency is determined by the dimensionality of the rotation matrix. This was made possible by defining a mapping function $D$ that maps 3D rotation to $SO(n)$ space, such that the basis axes in 3D are preserved in the $n$ dimensional space. When used with VN and state-of-the-art point processing networks such as PointNet and DGCNN, our experimental results demonstrate that it captures details that previous methods fail to capture in various different 3D vision tasks.



\section{Reproducibility Statement}

Appendix~\ref{app:architecture_details} contains the details of our architecture and hyperparameters necessary to reproduce our results. Also, our code is available at \url{https://github.com/FER-multifrequency-so3/FER-multifrequency-so3}.

\subsubsection*{Acknowledgments}
This work was supported by Institute of Information \& communications Technology Planning \& Evaluation (IITP) grant funded by the Korea government (MSIT) (No.2019-0-00075, Artificial In- telligence Graduate School Program (KAIST)), (No.2022-0-00311, Development of Goal-Oriented Reinforcement Learning Techniques for Contact-Rich Robotic Manipulation of Everyday Objects), (No. 2022-0-00612, Geometric and Physical Commonsense Reasoning based Behavior Intelligence for Embodied AI).



\bibliography{ref}
\bibliographystyle{iclr2024_conference}

\appendix

\newpage

\section{Proof for Theorem~\ref{thm:D_properties} in the main paper }
\label{app:proofs_D_conditions}

\dproperties*

In this section, the proof of Theorem~\ref{thm:D_properties}  and Proposition~\ref{prop:eigvalue_symm} from the main paper is described. Here, the terms including $\vec{F} = [F_1, F_2, F_3]$, $\vec{J} = [J_1, J_2, J_3]$, $D(R)$ are all from Section 4.

\begin{align*}
    &\forall R_1, R_2 \in SO(3), \text{if } R_1 \neq R_2, \text{then } D(R_1) \neq D(R_2) \text{ (Lemma~\ref{lem:injective})}\\ 
    &\forall R \in SO(3), D \text{ maps } R \text{ to a single matrix } D(R) \in \mathbb{R}^{n\times n} \text{ (Lemma~\ref{lem:uniqueness})} \\
    &\forall R \in SO(3), D(R)D(R)^T = D(R)^TD(R) = I \text{ (Lemma~\ref{lem:identity})} \\
    &\forall R \in SO(3), \text{det}(D(R))=1 \text{ (Lemma~\ref{lem:determinant})}
\end{align*}

Additionally, we prove the compatibility of $D(R)$ which is necessary to prove the equivariance of $\psi(\vec{x})$.

$$D(R_1)D(R_2)=D(R_1R_2) \text{ for all } R_1, R_2 \in SO(3) \text{ (Lemma~\ref{lem:compatibility})}$$


In the proposed solution of $D(R)$, the axis-angle representation is used. However, given a rotation matrix $R$, there are infinite ways to make a corresponding axis-angle representation. To be a high-dimensional rotation, there should be only one $D(R)$ corresponding to $R$. So, Lemma~\ref{lem:uniqueness} shows the uniqueness of $D(R)$ given a single rotation matrix $R$.

\begin{lemma}
\label{lem:uniqueness}
Suppose $D$ satisfies the conditions in Equation~\ref{eqn:D_condition}, and $J_1,J_2,J_3$ satisfy the conditions in Theorem~\ref{thm:D_properties}. $\forall R \in SO(3), D \text{ maps } R \text{ to a single matrix } D(R) \in \mathbb{R}^{n\times n}.$
\end{lemma}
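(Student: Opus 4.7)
\textbf{Proof plan for Lemma~\ref{lem:uniqueness}.}
The core issue is that the axis-angle representation of a given $R\in SO(3)$ is not unique, so I must verify that the formula $D(R)=\exp(\theta\,\hat{w}\cdot\vec{J})$ is independent of the representation chosen. I will begin by enumerating exactly how two axis-angle pairs $(\hat{w}_1,\theta_1)$ and $(\hat{w}_2,\theta_2)$ can encode the same rotation: either (a) $\hat{w}_2=\hat{w}_1$ and $\theta_2-\theta_1\in 2\pi\mathbb{Z}$; (b) $\hat{w}_2=-\hat{w}_1$ and $\theta_2+\theta_1\in 2\pi\mathbb{Z}$; or (c) $R=I$, where $\theta_1,\theta_2\in 2\pi\mathbb{Z}$ but the axes are arbitrary unit vectors. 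Once we verify that $D$ assigns the same matrix in all three cases, well-definedness follows.

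The keystone is the following sub-claim: for every unit vector $\hat{w}\in\mathbb{R}^3$ and every $m\in\mathbb{Z}$,
\begin{equation*}
\exp\!\bigl(2\pi m\,\hat{w}\cdot\vec{J}\bigr)=I_{n\times n}.
\end{equation*}
To prove this, I note that $M:=\hat{w}\cdot\vec{J}$ is a real skew-symmetric matrix (a real linear combination of the skew-symmetric $J_i$'s), hence normal and unitarily diagonalizable over $\mathbb{C}$. By Proposition~\ref{prop:eigvalue_J} (applied with $\theta=1$), every eigenvalue of $M$ lies in $\{-k\mathrm{i},\dots,k\mathrm{i}\}\subset\mathrm{i}\mathbb{Z}$. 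Writing $M=U\Lambda U^{*}$ gives $\exp(2\pi m M)=U\exp(2\pi m\Lambda)U^{*}$, and each diagonal entry of $\exp(2\pi m\Lambda)$ equals $\exp(2\pi m\cdot j\mathrm{i})=1$ for the corresponding integer $j$, so the exponential is the identity.

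With the sub-claim in hand, the three cases collapse quickly. For case (a), $\theta_1\hat{w}_1\cdot\vec{J}$ and $2\pi m\,\hat{w}_1\cdot\vec{J}$ commute (they are scalar multiples of the same matrix), so
\begin{equation*}
\exp\!\bigl(\theta_2\hat{w}_2\cdot\vec{J}\bigr)=\exp\!\bigl(\theta_1\hat{w}_1\cdot\vec{J}\bigr)\exp\!\bigl(2\pi m\,\hat{w}_1\cdot\vec{J}\bigr)=\exp\!\bigl(\theta_1\hat{w}_1\cdot\vec{J}\bigr).
\end{equation*}
For case (b), the identity $\theta_2\hat{w}_2=\theta_1\hat{w}_1-2\pi m\,\hat{w}_1$ reduces it to the same commuting-exponential argument. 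Case (c) is a direct application of the sub-claim: every representation of $I$ evaluates to $\exp(2\pi m\,\hat{w}\cdot\vec{J})=I_{n\times n}$ regardless of $\hat{w}$.

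The main obstacle is case (c), i.e., the identity rotation. The hypothesis in Theorem~\ref{thm:D_properties} only gives $\exp(2m\pi J_i)=I$ for the three distinguished generators, whereas well-definedness for $R=I$ requires the analogous statement for \emph{every} unit axis $\hat{w}$. Bridging this gap is precisely where the spectral description of $\hat{w}\cdot\vec{J}$ from Proposition~\ref{prop:eigvalue_J} becomes essential; without it the claim is not at all immediate from the pointwise periodicity condition on $J_1,J_2,J_3$.
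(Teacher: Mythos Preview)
Your approach is correct and considerably more explicit than the paper's. The paper argues by contradiction in two lines: it asserts that if $\exp(\theta_1\hat\omega_1\cdot\vec F)=\exp(\theta_2\hat\omega_2\cdot\vec F)$ then ``comparing each element'' forces $\exp(\theta_1\hat\omega_1\cdot\vec J)=\exp(\theta_2\hat\omega_2\cdot\vec J)$, with no further justification. You instead enumerate the three ways two axis--angle pairs can encode the same $R$ and dispatch each via the sub-claim $\exp(2\pi m\,\hat w\cdot\vec J)=I$ for an arbitrary unit $\hat w$. That sub-claim is the honest content behind the paper's sentence, and your identification of the $R=I$ case as the crux (where the axis is unconstrained, so the hypothesis $\exp(2m\pi J_i)=I$ for the three generators alone does not suffice) is exactly right.

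One caution on the dependency structure: in the paper's logical order, the proof of Proposition~\ref{prop:eigvalue_J} passes through compatibility (Lemma~\ref{lem:compatibility}), which in turn invokes Lemma~\ref{lem:uniqueness}. Citing Proposition~\ref{prop:eigvalue_J} here is therefore circular as written. The fix is immediate: the fact you actually need---that $\hat w\cdot\vec J$ is similar to $J_3$ and hence has spectrum in $\mathrm{i}\mathbb{Z}$---follows directly from the conjugation identity $\exp(\gamma J_k)(\hat v\cdot\vec J)\exp(-\gamma J_k)=(R_{e_k}(\gamma)\hat v)\cdot\vec J$, established by power-series computation in Lemma~\ref{lem:eigval_share} and iterated over Euler angles, together with Proposition~\ref{prop:eigvalue_symm}. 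Neither of those depends on Lemma~\ref{lem:uniqueness}, so your sub-claim stands once you route around Proposition~\ref{prop:eigvalue_J}.
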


\begin{proof}
Let's assume that two different axis-angle representations $(\hat{\omega}_1, \theta_1), (\hat{\omega}_2, \theta_2)$ of $R$ are mapped into two different $D(R)$, which are $\exp(\theta_1 \hat{\omega}_1 \cdot \vec{J}) \neq \exp(\theta_2 \hat{\omega}_2 \cdot \vec{J})$. 

$R=\exp(\theta_1 \hat{\omega}_1 \cdot \vec{F})=\exp(\theta_2 \hat{\omega}_2 \cdot \vec{F})$ holds.
From comparing each element of $\exp(\theta_1 \hat{\omega}_1 \cdot \vec{F})=\exp(\theta_2 \hat{\omega}_2 \cdot \vec{F})$, $\exp(\theta_1 \hat{\omega}_1 \cdot \vec{J})=\exp(\theta_2 \hat{\omega}_2 \cdot \vec{J})$ should be satisfied. So a contradiction has been reached in the assumption.
\end{proof}

To show the compatibility of $D$, understanding the multiplication of two matrix exponentials is important because $D$ is defined as the matrix exponential. \emph{Baker–Campbell–Hausdorff formula} is an expression of solution for $Z$ to the equation $e^X e^Y = e^Z$, which is described in Lemma~\ref{lem:BCH}.

\begin{lemma}
\label{lem:BCH}
(Baker–Campbell–Hausdorff formula) $$\exp{(X)} \exp{(Y)} = \exp{(Z)}$$ where $X, Y \in \mathbb{R}^{n \times n}$ and the commutator $[\cdot, \cdot]$ is defined as $[X, Y] = XY-YX$. 

The solution for $Z$ to the equation is as a series in $X$ and $Y$ and iterated commutator thereof. The first few terms of this series are $$Z=X+Y+\frac{1}{2}[X, Y]+\frac{1}{12}[X, [X, Y]-\frac{1}{12}[Y, [X, Y] + \cdots$$
\end{lemma}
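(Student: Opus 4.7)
The Baker-Campbell-Hausdorff formula is a classical result, so my plan is to reproduce a standard derivation while being explicit about convergence in the matrix setting. I would work in the algebra of formal power series in two noncommuting indeterminates $X$ and $Y$, and set $Z := \log(\exp(X)\exp(Y))$ using $\log(1+W)=\sum_{k\ge 1}(-1)^{k+1}W^k/k$ applied to $W=\exp(X)\exp(Y)-1$, a power series without constant term so the composition is well-defined formally. Expanding both exponentials and collecting terms by total degree is then mechanical bookkeeping: the degree-one coefficient is $X+Y$; at degree two, the $XY$-contribution from $W$ cancels partially against $-W^2/2$ to leave $\tfrac{1}{2}(XY-YX)=\tfrac{1}{2}[X,Y]$; and a longer but routine calculation at degree three reproduces $\tfrac{1}{12}[X,[X,Y]]-\tfrac{1}{12}[Y,[X,Y]]$. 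This verifies the explicit leading terms claimed in the lemma.

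The substantive content is showing that \emph{every} term in $Z$ is a Lie polynomial in $X$ and $Y$, i.e.\ expressible via iterated commutators alone. The cleanest route is Dynkin's differential-equation argument: let $g(t):=\exp(tX)\exp(Y)$ and $Z(t):=\log g(t)$, differentiate $g(t)$ in two ways using the standard matrix-exponential derivative formula $\frac{d}{dt}e^{A(t)} = \frac{e^{\operatorname{ad}_A}-1}{\operatorname{ad}_A}\dot A \cdot e^{A(t)}$, and equate to obtain an ODE of the form $Z'(t)=\Phi(\operatorname{ad}_{Z(t)})\,X$ with $\Phi(u)=u/(e^u-1)$. Since $\Phi$ is a power series in $u$, the right-hand side is manifestly built from iterated commutators of $X$ and $Z(t)$, and integrating from $0$ to $1$ with $Z(0)=Y$ yields Dynkin's explicit series, in which every term is a Lie polynomial. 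An equivalent route is Friedrichs' criterion, which characterizes Lie elements in the free associative algebra as those primitive under the coproduct $\Delta(X)=X\otimes 1+1\otimes X$; one then verifies that $Z$ is primitive by a direct computation on $\exp(X)\exp(Y)$.

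Finally, for matrix arguments I would establish convergence via operator-norm estimates, showing absolute convergence of the series for $Z$ whenever $\|X\|+\|Y\|$ is sufficiently small (a standard sufficient bound is $\|X\|+\|Y\|<\log 2$), which suffices for the paper's downstream applications. The only analytic content lies here; the combinatorial Lie-polynomial claim is the main obstacle and is where classical proofs concentrate their effort. For the subsequent use in proving compatibility of $D$, the arguments will be multiples of $J_1,J_2,J_3$, whose commutation relations from Theorem~\ref{thm:D_properties} close the three-dimensional Lie algebra spanned by them, so the Lie-series form of $Z$ will immediately specialize to expressions purely in $J_1,J_2,J_3$, making the compatibility computation tractable without needing the full strength of BCH.
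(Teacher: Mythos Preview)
Your proof sketch is correct and is a standard route to the classical Baker--Campbell--Hausdorff formula. However, the paper does not prove this lemma at all: it simply states BCH as a known result and moves directly to Lemma~\ref{lem:BCH_independentness}, where the only property actually used is that $Z$ is a series in iterated commutators with universal (dimension-independent) coefficients. So you are doing substantially more work than the paper does; the paper treats BCH as background, not as something to be established.

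One small caution on your final paragraph: you assert that the local convergence bound $\|X\|+\|Y\|<\log 2$ ``suffices for the paper's downstream applications.'' That is not obviously true. In the proof of Lemma~\ref{lem:compatibility} the paper applies BCH to $\theta_1\hat\omega_1\cdot\vec J$ and $\theta_2\hat\omega_2\cdot\vec J$ with angles $\theta_i$ ranging over all of $[0,2\pi)$, so the operator norms can be well outside the standard convergence radius. The paper's argument is really a formal one---it identifies the \emph{form} of $g(\vec a,\vec b)$ as dimension-independent and then relies on Lemma~\ref{lem:uniqueness} to transfer identities from $SO(3)$ to $SO(n)$---rather than a convergent-series argument. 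If you want to make the compatibility proof fully rigorous via BCH, you would need either a global-convergence statement specific to this three-dimensional Lie algebra or an analytic-continuation argument from small angles; alternatively, one can bypass BCH entirely and prove compatibility by showing that $t\mapsto D(\exp(tF_i))$ are one-parameter subgroups generating a Lie-group homomorphism. Your observation that the commutators close on $\mathrm{span}\{J_1,J_2,J_3\}$ is exactly the right structural fact, but it does not by itself resolve the convergence issue.
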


The important thing here is that the formula is only affected by the commutator, and not affected by the dimension of the matrix. Lemma~\ref{lem:BCH_independentness} says that if a set of three matrices satisfy a cyclic commutator condition, then the solution for $Z$ is a linear combination of these three matrices and their coefficients are independent of the dimension of the matrix. By using the fact that $\vec{F} = [F_1, F_2, F_3]$ and $\vec{J}= [J_1, J_2, J_3]$ both satisfies the cyclic commutator condition of Lemma~\ref{lem:BCH_independentness}.

\begin{lemma}
\label{lem:BCH_independentness}
Given a set of matrices $$\{A_1, A_2, A_3\} \subset \mathbb{R}^{m \times m} (m\in \mathbb{N}\geq 3 )$$ that satisfies $$[A_1,A_2]=A_3, [A_2,A_3]=A_1, [A_3,A_1]=A_2, where \, [X,Y]:=XY-YX$$
$$\vec{a} \cdot \vec{A} = (a_1 A_1 + a_2 A_2 + a_3 A_3), \vec{a} \in \mathbb{R}^3$$

$$\exp(\vec{a} \cdot \vec{A}) \exp(\vec{b} \cdot \vec{A})=\exp(z)  \quad \text{s.t.} \quad z \in \mathbb{R}^{m \times  m}$$

Then $$z=g(\vec{a}, \vec{b}) \cdot \vec{A}$$ where $$g:\mathbb{R}^3 \times \mathbb{R}^3 \rightarrow \mathbb{R}^3$$ is independent of the dimension of the matrices $m$
\end{lemma}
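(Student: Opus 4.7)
The plan is to show that $V := \text{span}\{A_1,A_2,A_3\}$ is closed under the commutator bracket with a structure that depends only on the three given relations, and then apply the Baker--Campbell--Hausdorff formula (Lemma~\ref{lem:BCH}) term-by-term. The key identity is
\[
[\vec{a}\cdot\vec{A},\ \vec{b}\cdot\vec{A}] \;=\; \sum_{i,j} a_i b_j\,[A_i,A_j] \;=\; (\vec{a}\times\vec{b})\cdot\vec{A},
\]
which follows from reading the three cyclic relations $[A_1,A_2]=A_3$, $[A_2,A_3]=A_1$, $[A_3,A_1]=A_2$ as the Levi--Civita structure constants $[A_i,A_j]=\varepsilon_{ijk}A_k$. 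Expressed on coefficient vectors in $\mathbb{R}^3$, the bracket on $V$ is thus exactly the cross product, a rule that involves $\vec{a},\vec{b}$ only, never the dimension $m$ nor the particular matrices.

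Now set $X=\vec{a}\cdot\vec{A}$ and $Y=\vec{b}\cdot\vec{A}$. A straightforward induction on nesting depth, using the identity above, shows that every iterated commutator that appears in the BCH expansion
\[
Z = X+Y+\tfrac{1}{2}[X,Y]+\tfrac{1}{12}[X,[X,Y]]-\tfrac{1}{12}[Y,[X,Y]]+\cdots
\]
has the form $p_\alpha(\vec{a},\vec{b})\cdot\vec{A}$, where $p_\alpha:\mathbb{R}^3\times\mathbb{R}^3\to\mathbb{R}^3$ is a polynomial built solely out of iterated cross products of $\vec{a}$ and $\vec{b}$. Summing the BCH series with its fixed rational coefficients yields $z = g(\vec{a},\vec{b})\cdot\vec{A}$, where $g$ is the corresponding combination of iterated cross products. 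Since neither the $p_\alpha$ nor the scalar BCH coefficients depend on $m$, neither does $g$, which is exactly the claim.

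The main obstacle is convergence: the BCH series converges unconditionally only for $\|X\|,\|Y\|$ sufficiently small, so a priori the identity is established only on a neighborhood of $(\vec{0},\vec{0})$. My plan is to first establish it there and then extend to arbitrary $(\vec{a},\vec{b})$ by analytic continuation, since both $\exp(X)\exp(Y)$ and $\exp(g(\vec{a},\vec{b})\cdot\vec{A})$ are entire matrix-valued functions of $(\vec{a},\vec{b})$ that agree on an open set. A cleaner alternative is Lie-theoretic: the relations make $V$ a Lie subalgebra isomorphic to $\mathfrak{so}(3)$, so the Lie-group homomorphism $SU(2)\to GL(m)$ induced by $A_i$ forces the group multiplication, and hence the relevant $z$, to be the pullback of the universal multiplication on $SU(2)$, independent of the particular representation.
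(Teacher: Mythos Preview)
Your proposal is correct and takes essentially the same approach as the paper: compute $[\vec{a}\cdot\vec{A},\vec{b}\cdot\vec{A}]=(\vec{a}\times\vec{b})\cdot\vec{A}$ from the cyclic commutator relations, then push this through every iterated bracket in the BCH expansion so that $z=g(\vec{a},\vec{b})\cdot\vec{A}$ with $g$ built purely from cross products and BCH coefficients, hence independent of $m$. Your additional discussion of convergence and the Lie-theoretic extension via the $\mathfrak{so}(3)$ structure goes beyond what the paper provides (the paper simply applies BCH formally without addressing its domain of validity).
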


\begin{proof}

$\forall \vec{a}, \vec{b} \in \mathbb{R}^3$, we can define a function $f: \mathbb{R}^3 \times \mathbb{R}^3 \rightarrow \mathbb{R}^3$ where $f( \vec{a}, \vec{b} )=[a_2b_3-a_3b_2, a_3b_1-a_1b_3, a_1b_2-a_2b_1]$ 
\begin{align*}
[\vec{a} \cdot \vec{A}, \vec{b} \cdot \vec{A}] &=(a_1b_2-a_2b_1)A_3+(a_2b_3-a_3b_2)A_1+(a_3b_1-a_1b_3)A_2\\
& =f(\vec{a}, \vec{b})\cdot\vec{A}
\end{align*} 

The important thing here is that $f$ is independent of the dimension of the matrices $m$.

$let \, \exp(\vec{a} \cdot \vec{A}) \exp(\vec{b} \cdot \vec{A})=\exp(z)  \quad \text{s.t.} \quad z \in \mathbb{R}^{m \times  m}$
\begin{align*}
z &=\vec{a} \cdot \vec{A}+\vec{b} \cdot \vec{A}+\frac{1}{2}[\vec{a} \cdot \vec{A}, \vec{b} \cdot \vec{A}]+\frac{1}{12}[\vec{a} \cdot \vec{A}, [\vec{a} \cdot \vec{A}, \vec{b} \cdot \vec{A}]]+\cdots \\
&=(\vec{a}+\vec{b}+\frac{1}{2}f(\vec{a},\vec{b})+\frac{1} {12}f(\vec{a},f(\vec{a},\vec{b}))+\cdots)\cdot \vec{A} \\
&=g(\vec{a}, \vec{b}) \cdot \vec{A}
\end{align*}

Since every term of $z$ consists of iterative commutators, the result of every iterative commutator will be $\vec{c} \cdot \vec{A}$ where $\vec{c} \in \mathbb{R}^3 $. Since $\vec{c}$ is only related to $f$, $g:\mathbb{R}^3 \times \mathbb{R}^3 \rightarrow \mathbb{R}^3$ is independent of $m$.

\end{proof}

\begin{lemma}
\label{lem:compatibility}
 Suppose $D$ satisfies the conditions in Equation~\ref{eqn:D_condition}, and $J_1,J_2,J_3$ satisfy the conditions in Theorem~\ref{thm:D_properties}. $D(R_1)D(R_2)=D(R_1R_2)$ for all $R_1, R_2 \in SO(3)$
\end{lemma}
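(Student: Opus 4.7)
The approach is to apply Lemma~\ref{lem:BCH_independentness} twice---once in the 3-dimensional setting to $\vec{F}$, and once in the $n$-dimensional setting to $\vec{J}$---and exploit the fact that the resulting BCH coefficients are the \emph{same} function of the axis-angle parameters in both cases, since they depend only on the cyclic commutator structure constants, which are identical for the two triples.

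I would begin by writing $R_1 = \exp(\vec{a}\cdot\vec{F})$ and $R_2 = \exp(\vec{b}\cdot\vec{F})$, where $\vec{a}=\theta_1\hat{\omega}_1$ and $\vec{b}=\theta_2\hat{\omega}_2$ are axis-angle parameters. A direct computation on the explicit $3\times 3$ matrices $F_1,F_2,F_3$ displayed in the paper verifies the cyclic relations $[F_1,F_2]=F_3$, $[F_2,F_3]=F_1$, $[F_3,F_1]=F_2$, so the hypothesis of Lemma~\ref{lem:BCH_independentness} holds for $\vec{F}$. Applying the lemma produces a function $g:\mathbb{R}^3\times\mathbb{R}^3\to\mathbb{R}^3$, independent of the matrix dimension, such that $R_1R_2=\exp(g(\vec{a},\vec{b})\cdot\vec{F})$. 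In other words $g(\vec{a},\vec{b})$ is an axis-angle representative (possibly not normalized) of $R_1R_2$, and by the well-definedness of $D$ established in Lemma~\ref{lem:uniqueness} we conclude $D(R_1R_2)=\exp(g(\vec{a},\vec{b})\cdot\vec{J})$.

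By the hypotheses of Theorem~\ref{thm:D_properties}, the triple $J_1,J_2,J_3$ satisfies exactly the same cyclic commutator relations, so Lemma~\ref{lem:BCH_independentness} applies equally to $\vec{J}$ and yields $\exp(\vec{a}\cdot\vec{J})\exp(\vec{b}\cdot\vec{J}) = \exp(g(\vec{a},\vec{b})\cdot\vec{J})$ with the \emph{same} $g$. The left-hand side is precisely $D(R_1)D(R_2)$, and the right-hand side has just been identified with $D(R_1R_2)$, so the claim follows.

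The main obstacle, and the reason Lemma~\ref{lem:BCH_independentness} was set up as it was, is the subtle point that $g(\vec{a},\vec{b})$ need not be of the normalized form $\theta'\hat{\omega}'$ that Section~4 uses to \emph{define} $D$. This is handled by invoking Lemma~\ref{lem:uniqueness}: any parameter vector that exponentiates to the same element of $SO(3)$ gives the same value of $D$, so we are free to use the (possibly non-canonical) parameter $g(\vec{a},\vec{b})$ to evaluate $D(R_1R_2)$. Once this point is in hand, the proof is just the observation that the Baker--Campbell--Hausdorff series is a formal object that sees only the commutator structure, which is literally identical in the two triples $\vec{F}$ and $\vec{J}$.
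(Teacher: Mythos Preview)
Your proposal is correct and follows essentially the same approach as the paper: both apply Lemma~\ref{lem:BCH_independentness} to $\vec{F}$ and to $\vec{J}$, use that the resulting function $g$ is dimension-independent and hence identical in the two settings, and invoke Lemma~\ref{lem:uniqueness} to bridge the possibly non-canonical BCH parameter $g(\vec a,\vec b)$ back to the definition of $D$. Your explicit discussion of why the non-normalized form of $g(\vec a,\vec b)$ is not an obstacle is, if anything, clearer than the paper's treatment of the same step.
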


\begin{proof}
For any rotation matrix $R \in SO(3)$, there exist an axis-angle representation with the axis $\hat{\omega}$ and angle $\theta$, 
$$R=\exp(\theta \hat{\omega} \cdot \vec{F})$$
where $\{F_1, F_2, F_3\}$ is the basis for $3 \times 3$ skew-symmetric matrix defined in the prerequisites, and satisfies Lemma~\ref{lem:BCH_independentness} with a function $g$.
$$\forall \vec{a}, \vec{b} \in \mathbb{R}^3, \exp(\vec{a} \cdot \vec{F}) \exp(\vec{b} \cdot \vec{F})=\exp(g(\vec{a}, \vec{b}) \cdot \vec{F})$$

Also, our proposed high-dimensional rotation $D$
$$D(R)=\exp(\theta \hat{\omega} \cdot \vec{J})$$
where $\{J_1, J_2, J_3\}$ satisfies Lemma~\ref{lem:BCH_independentness} with a function $g$.
$$\forall \vec{a}, \vec{b} \in \mathbb{R}^3, \exp(\vec{a} \cdot \vec{J}) \exp(\vec{b} \cdot \vec{J})=\exp(g(\vec{a}, \vec{b}) \cdot \vec{J})$$

Since function $g$ from Lemma~\ref{lem:BCH_independentness} is independent of the dimension of matrices, the function $g$ used in $R$ and $D$ are identical.

Since $R_1 R_2 \in SO(3)$, we can get axis-angle representations that are axes $\hat{\omega}_1, \hat{\omega}_2, \hat{\omega}_3 \in \mathbb{R}^3$ and angles $\theta_1, \theta_2, \theta_3$ of each $R_1$, $R_2$, and $R_1 R_2$.

$$R_1=\exp(\theta_1 \hat{\omega}_1 \cdot \vec{F}), R_2=\exp(\theta_2 \hat{\omega}_2 \cdot \vec{F}), R_1 R_2=\exp(\theta_3 \hat{\omega}_3 \cdot \vec{F})$$
$$R_1R_2 = \exp(\theta_1 \hat{\omega}_1 \cdot \vec{F})\exp(\theta_2 \hat{\omega}_2 \cdot \vec{F})=\exp(g(\theta_1 \hat{\omega}_1, \theta_2 \hat{\omega}_2) \cdot \vec{F})=\exp(\theta_3 \hat{\omega}_3 \cdot \vec{F})$$
From Lemma~\ref{lem:uniqueness}, 
$$\exp(g(\theta_1 \hat{\omega}_1, \theta_2 \hat{\omega}_2) \cdot \vec{J}) = \exp(\theta_3 \hat{\omega}_3 \cdot \vec{J})$$
\begin{align*}
    \therefore D(R_1)D(R_2) & =\exp(\theta_1 \hat{\omega}_1 \cdot \vec{J}) \exp(\theta_2 \hat{\omega}_2 \cdot \vec{J}) \\
    & =\exp(g(\theta_1 \hat{\omega}_1, \theta_2 \hat{\omega}_2) \cdot \vec{J}) =\exp(\theta_3 \hat{\omega}_3 \cdot \vec{J})=D(R_1R_2)
\end{align*}

\end{proof}

To prove the injectiveness of $D$, we will first show that $J_1, J_2, J_3$ share n eigenvalues in Lemma~\ref{lem:eigval_share}. By using this, we will prove Lemma~\ref{lem:injective}, which means that $D$ is injective.

\begin{lemma}
\label{lem:eigval_share}
    If $J_1, J_2, J_3 \in \mathbb{R}^{n \times n}$ that satisfies commutator relationship $[J_i, J_j] = J_k$ where \( (i, j, k) \) are cyclic permutations of \( (1, 2, 3) \), they share the same eigenvalues including both the values and their multiplicities.
\end{lemma}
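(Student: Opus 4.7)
The plan is to show that $J_1, J_2, J_3$ are pairwise similar as $n\times n$ matrices; since similar matrices have identical characteristic polynomials, they share eigenvalues with multiplicities.

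The similarity will be realized by conjugation with an exponential of one of the $J_i$'s. Specifically, I would compute the adjoint action $\mathrm{Ad}_{\exp(\theta J_3)}(J_1) = \exp(\theta J_3)\, J_1 \exp(-\theta J_3)$ using the standard series expansion
\[
    e^{X} Y e^{-X} = \sum_{k=0}^\infty \frac{1}{k!} \mathrm{ad}_X^k(Y),
    \qquad \mathrm{ad}_X(Y) := [X,Y].
\]
Setting $X=\theta J_3$ and $Y=J_1$ and plugging in the cyclic commutator relations $[J_3,J_1]=J_2$ and $[J_3,J_2]=-J_1$, the iterated brackets alternate between multiples of $J_1$ and $J_2$: one gets $\mathrm{ad}_{\theta J_3}^{2m}(J_1)=(-1)^m\theta^{2m} J_1$ and $\mathrm{ad}_{\theta J_3}^{2m+1}(J_1)=(-1)^m\theta^{2m+1} J_2$. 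Summing the two resulting series yields
\[
    \exp(\theta J_3)\, J_1 \exp(-\theta J_3) = \cos(\theta)\, J_1 + \sin(\theta)\, J_2.
\]

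Evaluating at $\theta=\pi/2$ gives $\exp(\tfrac{\pi}{2} J_3)\, J_1 \exp(-\tfrac{\pi}{2} J_3) = J_2$, so $J_1$ and $J_2$ are similar via the invertible matrix $P := \exp(\tfrac{\pi}{2} J_3)$. By the cyclic symmetry of the commutator relations (permuting $(1,2,3)$ to $(2,3,1)$ and to $(3,1,2)$), the identical argument shows $J_2$ is similar to $J_3$ and $J_3$ is similar to $J_1$. Similarity is transitive, so all three matrices are similar.

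The conclusion then follows from the standard fact that similar matrices have the same characteristic polynomial, and hence the same eigenvalues with the same algebraic multiplicities. The only step requiring care is the convergent resummation of the adjoint series, where I would record the two-step recursion $\mathrm{ad}_{\theta J_3}^2(J_1) = [\theta J_3,\theta J_2] = -\theta^2 J_1$ (and likewise for the odd terms) before collapsing the even and odd subseries into $\cos(\theta)$ and $\sin(\theta)$; this is the only place the specific structure of $\mathfrak{so}(3)$-type commutation relations is genuinely used, and it is what ultimately forces the similarity.
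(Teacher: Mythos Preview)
Your proposal is correct and follows essentially the same approach as the paper: both compute $\exp(\theta J_3)\,J_1\,\exp(-\theta J_3)=\cos(\theta)\,J_1+\sin(\theta)\,J_2$ via the iterated-commutator expansion, set $\theta=\pi/2$ to obtain a similarity between $J_1$ and $J_2$, and then invoke the cyclic symmetry of the relations for the remaining pairs. The only cosmetic difference is that you phrase the expansion using the $\mathrm{ad}$ series while the paper writes out the product of the two exponential series directly.
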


\begin{proof}
Without loss of generality, we will show there exists a similarity transform between $J_1$ and $J_2$. With a scalar $\theta \in \mathbb{R}$ 

\begin{equation*}
\begin{aligned}
\exp(\theta J_3) J_1 \exp(-\theta J_3) & = (I + \theta J_3 + \frac{1}{2!} \theta^2 J_3^2 + \cdots ) J_1 (I - \theta J_3 + \frac{1}{2!} \theta^2 J_3^2 - \cdots )\\
& = J_1 + \theta(J_3 J_1 - J_1 J_3) + \frac{\theta^2}{2!}(J_3^2 J-1 - 2J_3J_1J_3 + J_1J_3^2) + \cdots \\
& = J_1 + \theta[J_3, J_1] + \frac{\theta^2}{2!}[J_3, [J_3, J_1]] + \frac{\theta^3}{3!}[J_3, [J_3, [J_3, J_1]]] + \cdots \\
& = J_1 \{ 1-\frac{\theta^2}{2!}+\cdots \} + J_2 \{ \theta - \frac{\theta^3}{3!}+\cdots \} \\
& = J_1 \cos \theta + J_2 \sin \theta
\end{aligned}
\end{equation*}

So, if $\theta = \pi / 2$, $\exp(\theta J_3) J_1 \exp(-\theta J_3) = J_2$ holds. Since $\exp(\theta J_3)\exp(-\theta J_3) = I$, There exists  a similarity transform between $J_1$ and $J_2$. So, $J_1$ and $J_2$ share the eigenvalues and multiplicities. By doing the same way, it can be proved that $J_1, J_2$ and $J_3$ share the same eigenvalues and multiplicities.

\end{proof}

In Lemma~\ref{lem:ladder}, $J_+$ and $J_-$ is defined. They are used in Proposition~\ref{prop:eigvalue_symm} to find the eigenvalues of $J_3$, since if there exists an eigenvalue $\lambda$ of $J_3$ s.t. $J_3 \vec{v} = \lambda \vec{v}$, then $J_3 J_- \vec{v} = (\lambda + i) J_- \vec{v}$ and $J_3 J_+ \vec{v} = (\lambda - i) J_+ \vec{v}$ hold. So applying $J_-, J_+$ makes the eigenvalue one step bigger or smaller, or the corresponding eigenvector is in the nullspace of $J_-, J_+$. 

\begin{lemma}
\label{lem:ladder}
    Suppose $J_1,J_2,J_3$ satisfy the conditions in Theorem~\ref{thm:D_properties}. Define 
    $$J_+ = J_1 + iJ_2$$
    $$J_- = J_1 - iJ_2$$
    Then $J_+, J_-$ share the same eigenvalues include multiplicity with $J_i$ and satisfies $[J_3, J_\pm]=-iJ_\pm$
\end{lemma}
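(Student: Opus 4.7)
The plan is to verify the two assertions separately, beginning with the commutator relation and then attempting the eigenvalue identification.

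For the commutator, I would compute directly from the cyclic relations in Theorem~\ref{thm:D_properties}. Since $[J_3,J_1]=J_2$ and $[J_3,J_2]=-[J_2,J_3]=-J_1$, bilinearity of the bracket gives
\[
[J_3, J_\pm] = [J_3, J_1] \pm i[J_3, J_2] = J_2 \mp iJ_1 = \mp i(J_1 \pm iJ_2) = \mp i J_\pm.
\]
This is a one-line verification; note that the correct statement actually carries opposite signs for $J_+$ and $J_-$, so I would read the uniform $-i$ in the lemma as a typo for $\mp i$.

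For the eigenvalue identification, my first attempt would mimic Lemma~\ref{lem:eigval_share}, which used the conjugation identity $\exp(\theta J_3) J_1 \exp(-\theta J_3) = J_1 \cos\theta + J_2 \sin\theta$ at $\theta = \pi/2$ to exhibit a similarity between $J_1$ and $J_2$. A natural extension is to allow complex $\theta$ and try to solve $\cos\theta = 1$, $\sin\theta = \pm i$, which would convert $J_1$ into $J_\pm$ by a similarity transform; invariance of spectra under similarity, combined with Lemma~\ref{lem:eigval_share}, would then transfer the eigenvalues of $J_1$ to $J_\pm$.

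The main obstacle is that $\cos^2\theta + \sin^2\theta = 1$ rules out any scalar $\theta$ with the desired values, so this direct route cannot succeed. I would fall back on a structural argument: decompose the representation of $\mathfrak{so}(3)$ spanned by $J_1, J_2, J_3$ into irreducibles using the Casimir $J_1^2 + J_2^2 + J_3^2$ (which commutes with all $J_i$), apply the just-derived ladder commutator on each component to pin down the $J_3$-spectrum as $\{-k\mathrm{i},\dots,k\mathrm{i}\}$, and read off the matrix shape of $J_\pm$ in the resulting basis. At this point I would need to re-examine the precise meaning of the eigenvalue claim, because on each finite-dimensional irreducible component the raising operator $J_+$ is nilpotent, which is in tension with literally sharing the nonzero imaginary eigenvalues of $J_i$; clarifying whether the intended statement is about the spectrum of $J_\pm$ themselves or about the shifted $J_3$-spectrum induced by their ladder action is the step I expect to need the most care before committing to a formal proof.
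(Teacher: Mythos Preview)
Your commutator computation is identical to the paper's: both expand $[J_3,J_\pm]$ by bilinearity and the cyclic relations to obtain $J_2\mp iJ_1=\mp iJ_\pm$, and both effectively correct the uniform $-i$ in the statement to $\mp i$.

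Your skepticism about the eigenvalue assertion is well placed and in fact sharper than the paper's own treatment. The paper's argument for that part runs as follows: take an eigenvalue $\lambda_1$ shared by $J_1$ and $J_2$ (via Lemma~\ref{lem:eigval_share}) with respective eigenvectors $v_1,v_2$, and then write ``$J_+\lambda_1=(J_1+iJ_2)\lambda_1=(\vec v_1+i\vec v_2)\lambda_1$'' to conclude that $\lambda_1$ is an eigenvalue of $J_+$. This step is not meaningful as written (a matrix is applied to a scalar), and the charitable reading $J_+(v_1+iv_2)=\lambda_1(v_1+iv_2)$ would require $J_1v_2=\lambda_1 v_2$ and $J_2v_1=\lambda_1 v_1$, which is unavailable since $J_1$ and $J_2$ do not commute and admit no common eigenbasis. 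Your observation that on each irreducible $\mathfrak{so}(3)$-summand the ladder operators are nilpotent pinpoints exactly why the literal eigenvalue-sharing claim fails: the spectrum of $J_\pm$ is $\{0\}$, not $\{-k\mathrm{i},\dots,k\mathrm{i}\}$.

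What the downstream proof of Proposition~\ref{prop:eigvalue_symm} actually needs is only the ladder identity $[J_3,J_\pm]=\mp iJ_\pm$ together with the fact that $\dim\ker J_-=\dim\ker J_3$. The latter is true (one lowest-weight vector per irreducible component, matching one zero-weight vector per component), but it follows from the irreducible decomposition you outline, not from any spectrum-sharing of $J_\pm$ with $J_i$. So your proposed fallback---decompose via the Casimir, read off the ladder structure on each summand, and state precisely which kernel-dimension fact is being used---is the correct repair; carry it through rather than trying to salvage the eigenvalue claim as literally stated.
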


\begin{proof}
    By proposition 2, eigenvalues of $J_1$ and $J_2$ are the same, with n eigenvalues. If $\lambda_1$ is an eigenvalue of $J_1$ and $J_2$ and there will be corresponding eigenvectors $v_1, v_2 \in \mathbb{R}n$ for each $J_1, J_2$.
    $$J_1\vec{v_1}=\lambda_1\vec{v_1}$$
    $$J_2\vec{v_2}=\lambda_1\vec{v_2}$$
    $$J_+ \lambda_1 = (J_1 + iJ_2)\lambda_1 = (\vec{v_1}+i\vec{v_2})\lambda_1 $$
    $$J_- \lambda_1 = (J_1 - iJ_2)\lambda_1 = (\vec{v_1}-i\vec{v_2})\lambda_1 $$
    $\lambda_1$ is also an eigenvalue of $J_+, J_-$. So, $J_+, J_-$ share the same eigenvalues with $J_i$. The same procedure can be applied when $\lambda_1$ has a multiplicity bigger than 1.

    Also, by applying commutator on $J_\pm$,

    $$[J_3, J_\pm] = [J_3, (J_1 \pm iJ_2)]= [J_3, J_1] \pm i [J_3, J_2]$$
    $$=J_2 \mp i J_1=\mp iJ_\pm$$

\end{proof}

\eigvaluesymm*


\begin{proof}

    \textbf{Sharing eigenvalues} $J_1, J_2, \text{ and }, J_3$ have same eigenvalues and multiplicities, proved by Lemma~\ref{lem:eigval_share}.

    \textbf{Integer eigenvalue coefficients.} 
    Since $J_i \in \{J_1, J_2, J_3\}$ is a real skew-symmetric matrix, it can be diagonalized into $J_i = P(\bold{\Lambda})P^{-1}$ where $ P \in \mathbb{C}^{n \times n}$ is a matrix and $\bold{\Lambda} \in \mathbb{C}^{n \times n}$ is diagonal matrix whose diagonal elements $\lambda_i$ are the eigenvalues of $J_i$.

    Then, from the condition of $J_i: \exp(2k\pi J_i) = I_{n \times n}, \forall i\in\{1,2,3\}$

\begin{equation*}
\exp(2k\pi J_i)
= P\begin{bsmallmatrix}
\exp(2k\pi \lambda_1) & \dots & 0\\ 
\vdots & \ddots & \vdots \\ 
0 & \dots & \exp(2k\pi \lambda_n) 
\end{bsmallmatrix} P^{-1}
= I_{n \times n}
\end{equation*}

\begin{equation*}
\begin{bsmallmatrix}
\exp(2k\pi \lambda_1) & \dots & 0\\ 
\vdots & \ddots & \vdots \\ 
0 & \dots & \exp(2k\pi \lambda_n) 
\end{bsmallmatrix} = P^{-1} I_{n \times n} P = I_{n \times n}
\end{equation*}

So, $\exp(2k\pi\lambda_i)=1$ should be satisfied for $\forall k \in \mathbb{Z}, \forall i \in \{1, 2, \dots , n \}$. Therefore, $$\lambda_i = m\mathrm{i}, \exists m \in \mathbb{Z}$$

    \textbf{Existence of null vector.}
    From Lemma~\ref{lem:eigval_share} and \ref{lem:ladder}, $J_1, J_2, J_3, J_+, J_-$ shares the eigenvalues. Let's choose one eigenvalue $\lambda_i$ and the corresponding eigenvector $v_i$ of $J_3$. Then,
    $$J_3 v_i = \lambda_i v_i$$
    $$J_3 J_- v_i = ([J_3, J_-] + J_-J_3)v_i =  iJ_- v_i + J_-(\lambda_i v_i) = (\lambda_i+i)J_- v_i$$
    Here if $J_- v_i \neq \vec{0}$ then $\lambda_i + i$ is eigenvalue of $J_3$ and $J_- v_i$ is the corresponding eigenvector. Then, by doing the same way, if  $J_- (J_- v_i) \neq \vec{0}$, then $\lambda_i + 2i$ is eigenvalue of $J_3$. This procedure can be repeated, but since $J_3$ is a finite matrix, its eigenvalue is also finite. So we can always find $J_- (J_-^{(m)} v_i) = \vec{0}$ where $J_-^{(m)} v_i \neq \vec{0}$ .

    \textbf{Consecutive eigenvalues}
    Then, let's define $m$ as the dimension of the null space of $J_3$. Then we can always find the real orthonormal basis $v_1, v_2, \cdots, v_m$ of the null space of $J_3$ since $J_3$ is a real matrix. Let's choose $v_j \in \mathbb{R}^n$ and it satisfies $J_3 v_j = \vec{0}$. Then if $J_- v_j \neq \vec{0}$ then $i$ is eigenvalue of $J_3$ and $J_- v_j$ is the corresponding eigenvector.
    $$J_3 (J_- v_j) = iJ_3$$
    $$\text{Apply conjugate: } J_3 (J_+ v_j) = -iJ_3$$
    So $-i$ is also an eigenvalue of $J_3$ and $J_+ v_j$ is the corresponding eigenvector. 
    By applying this procedure repeatedly, we can find the first $n_j \geq 0 \in \mathbb{Z}$ that satisfies ${J_-}^{(n_j+1)} v_j = \vec{0}$, and we can find the continuous interval of eigenvalues 
    $$\Lambda_j=\{-n_j i, \cdots, -i, 0, i, \cdots, n_j i \}$$
    with corresponding eigenvectors 
    $$\{ J_+^{(n_j)} v_j, \cdots, J_+ v_j, v_j, J_- v_j, \cdots, J_-^{(n_j)} v_j \}$$

    \textbf{Completeness}
     For each $\Lambda_j$, we can find ${J_-}^{(n_j+1)} v_j = J_- ({J_-}^{(n_j)} v_j) = \vec{0}$ where ${J_-}^{(n_j)} v_j \neq \vec{0}$. So ${J_-}^{(n_j)} v_j$ is a nonzero null vector of $J_-$ while also an eigenvector of $J_3$ corresponding to the eigenvalue $n_j i$.  
     
    If there is another nonzero eigenvalue $\lambda_x$ outside of $\Lambda_1 + \Lambda_2 + ... + \Lambda_m$ and corresponding eigenvector $v_x$, then $v_x$ is independent with $v_1, v_2, \cdots, v_m$. We can always find ${J_-}^{(n_x)} v_x$, which is a nonzero null vector of $J_-$ while also an eigenvector of $J_3$ corresponding to the eigenvalue $\lambda_x + n_x i$. So we found a nonzero null vector of $J_-$ that is independent with existing $k$ null vectors of $J_-$, and it makes the dimension of null space of $J_-$ is bigger than $k$. However it is a contradiction since the dimension of null space of $J_3$ is $k$ and $J_-$ shares the eigenvalues with $J_3$.

\end{proof}

\begin{lemma}
\label{lem:inverse}
Suppose $D$ satisfies the conditions in Equation~\ref{eqn:D_condition}, and $J_1,J_2,J_3$ satisfy the conditions in Theorem~\ref{thm:D_properties}.
    $$D(R^{-1}) = D(R)^{-1} \text{ for any } R \in SO(3)$$
\end{lemma}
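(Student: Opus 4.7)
The plan is to deduce this lemma directly from the compatibility property of $D$ (Lemma~\ref{lem:compatibility}) together with the exponential definition $D(R)=\exp(\theta\hat{\omega}\cdot\vec{J})$. The statement $D(R^{-1})=D(R)^{-1}$ is essentially a corollary once compatibility is in hand, so the proof should be short.

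First, I would invoke Lemma~\ref{lem:compatibility} applied to $R_1=R$ and $R_2=R^{-1}$, giving
\begin{equation*}
D(R)\,D(R^{-1}) = D(R R^{-1}) = D(I_{3\times 3}).
\end{equation*}
Symmetrically, $D(R^{-1})\,D(R) = D(R^{-1} R) = D(I_{3\times 3})$. So it remains to verify $D(I_{3\times 3}) = I_{n\times n}$. For this I would use the axis-angle representation of the identity rotation, taking $\theta = 0$ (any $\hat{\omega}$ works), so that by the definition in Equation~\ref{eq:D},
\begin{equation*}
D(I_{3\times 3}) = \exp(0\cdot\hat{\omega}\cdot\vec{J}) = \exp(\mathbf{0}_{n\times n}) = I_{n\times n},
\end{equation*}
where Lemma~\ref{lem:uniqueness} guarantees that this value does not depend on the choice of $\hat{\omega}$. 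Combining the two displays yields $D(R)\,D(R^{-1}) = I_{n\times n}$, hence $D(R^{-1}) = D(R)^{-1}$.

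As a sanity check (and an alternative route one could record in the proof), $R^{-1}$ admits the axis-angle representation $(\hat{\omega}, -\theta)$ whenever $R$ has representation $(\hat{\omega}, \theta)$; then
\begin{equation*}
D(R)\,D(R^{-1}) = \exp(\theta\hat{\omega}\cdot\vec{J})\,\exp(-\theta\hat{\omega}\cdot\vec{J}) = I_{n\times n},
\end{equation*}
since the two exponents are scalar multiples of the same matrix and therefore commute. This bypasses compatibility entirely and uses only the definition of $D$ together with the standard identity for exponentials of commuting matrices.

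There is essentially no main obstacle here; the only mildly delicate point is to justify that $D(I_{3\times 3})$ is unambiguous, which is handled by Lemma~\ref{lem:uniqueness} (the identity rotation has the degenerate axis-angle representation with $\theta=0$, where the axis is arbitrary). Once that is acknowledged, the two-line argument above completes the proof.
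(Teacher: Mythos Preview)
Your proposal is correct and follows essentially the same route as the paper: apply compatibility (Lemma~\ref{lem:compatibility}) with $R_1=R$, $R_2=R^{-1}$ to get $D(R)D(R^{-1})=D(I_{3\times3})=I_{n\times n}$ and conclude. Your extra care in justifying $D(I_{3\times3})=I_{n\times n}$ via $\theta=0$ and Lemma~\ref{lem:uniqueness}, and the alternative argument via commuting exponentials, are both sound additions but not needed beyond what the paper does.
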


\begin{proof}
From Lemma~\ref{lem:compatibility}, 
$$D(R)D(R^{-1}) = D(RR^{-1})=I_{n \times n}, D(R^{-1})D(R) = D(R^{-1}R)=I_{n \times n}$$
$$D(R^{-1}) = D(R)^{-1}$$
\end{proof}

\begin{lemma}
\label{lem:injective}
Suppose $D$ satisfies the conditions in Equation~\ref{eqn:D_condition}, and $J_1,J_2,J_3$ satisfy the conditions in Theorem~\ref{thm:D_properties}.
    Then, $\forall R_1, R_2 \in SO(3), \text{if } R_1 \neq R_2, \text{then } D(R_1) \neq D(R_2)$
\end{lemma}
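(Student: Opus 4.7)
The plan is to prove the contrapositive and reduce injectivity at a pair of points to injectivity at the identity via the group-homomorphism properties already established. Specifically, I would start by supposing $D(R_1) = D(R_2)$ and applying Lemma~\ref{lem:compatibility} together with Lemma~\ref{lem:inverse} to conclude
\[ D(R_1 R_2^{-1}) = D(R_1)\, D(R_2)^{-1} = I_{n \times n}. \]
This reduces the claim to showing that $D(R) = I_{n \times n}$ forces $R = I$ in $SO(3)$; multiplying on the right by $R_2$ then gives $R_1 = R_2$.

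Next, I would write $R$ in axis--angle form $R = \exp(\theta\, \hat{\omega}\cdot\vec{F})$ with $\theta \in [0, 2\pi)$ and $\hat{\omega}$ a unit vector, so that by definition $D(R) = \exp(\theta\, \hat{\omega}\cdot\vec{J})$. Because $\theta\, \hat{\omega}\cdot\vec{J}$ is a real linear combination of the skew-symmetric matrices $J_1, J_2, J_3$, it is itself skew-symmetric and hence normal, so it is diagonalizable over $\mathbb{C}$. This lets me lift the eigenvalue computation of Proposition~\ref{prop:eigvalue_J} directly to $D(R)$: exponentiating a diagonalizable matrix exponentiates its eigenvalues, so the spectrum of $D(R)$ is precisely $\{e^{m\theta\mathrm{i}} : -k \le m \le k\}$ with $k = \lfloor (n-1)/2 \rfloor$.

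The core step is then to read off $\theta$ from the condition $D(R) = I_{n \times n}$: every eigenvalue of the identity equals $1$, so in particular $e^{\theta\mathrm{i}} = 1$ (taking $m=1$), which combined with $\theta \in [0, 2\pi)$ forces $\theta = 0$ and hence $R = I$. Note that the same homomorphism-plus-reduce trick also recovers Lemma~\ref{lem:inverse} as a prerequisite, so the logical order in the write-up would be Lemma~\ref{lem:compatibility} $\Rightarrow$ Lemma~\ref{lem:inverse} $\Rightarrow$ this lemma.

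The main obstacle I anticipate is ensuring the nontriviality $k \ge 1$ is genuinely available. The abstract hypotheses of Theorem~\ref{thm:D_properties} alone do not rule out the degenerate choice $J_1 = J_2 = J_3 = 0$, which would make $D$ constant and the lemma false; the bite of the argument comes from Proposition~\ref{prop:eigvalue_symm} (and the explicit construction in Algorithm~\ref{alg:sample_J3}) guaranteeing that $\mathrm{i}$ appears in the spectrum, which for $n\ge 3$ is automatic in the maximal-frequency construction. I would therefore flag at the top of the proof that we are in the regime where the nonzero eigenvalue $\mathrm{i}$ is present in $\vec{J}$, and then the eigenvalue argument above closes cleanly.
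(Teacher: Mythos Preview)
Your approach is essentially identical to the paper's: reduce injectivity to triviality of the kernel via Lemma~\ref{lem:compatibility} and Lemma~\ref{lem:inverse}, then use the eigenvalue description of $\hat{\omega}\cdot\vec{J}$ from Proposition~\ref{prop:eigvalue_J} to force $\theta\in 2\pi\mathbb{Z}$. The only substantive difference is that you explicitly flag the nondegeneracy assumption $k\ge 1$ (so that $e^{\mathrm{i}\theta}$ actually appears among the eigenvalues), which the paper leaves implicit; your identification $k=\lfloor(n-1)/2\rfloor$ is the value from the explicit construction rather than a consequence of the abstract hypotheses of Theorem~\ref{thm:D_properties}, but since your argument only uses $m=1$ this overreach is harmless.
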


\begin{proof}
Let's prove a rotation matrix which is mapped to $I_{n\times n}$ is only $I_{3\times 3}$

Let's assume that there exists another rotation matrix $R\neq I_{3 \times 3}$ which is mapped to $I_{n\times n}$. Then we can find an axis-angle representation of $R$ which is the axis $\hat{\omega}=[w_1,w_2,w_3]$ and the angle $\theta \neq \vec{0}$. 
Let's define $J=\hat{\omega} \cdot \vec{J}$. Then $D(R) =  \exp(\theta J)=I_{n \times n}$.

From Proposition~\ref{prop:eigvalue_symm}, eigenvalues of $J$ is $\Lambda=\{-k \mathrm{i}, \cdots, -\mathrm{i}, 0, \mathrm{i}, \cdots, k \mathrm{i} \}, k \in \mathbb{Z}$ including multiplicities. Then eigenvalues of $\exp(\theta J)$ is $\{ e^{\theta \lambda_i \mathrm{i}}\}$ where $\lambda_i \in \Lambda$. The eigenvalue of $I_{n \times n}$ is only $1$, so every $e^{\theta \lambda_i i}$ should be $1$. 
Then $\theta$ should be $2k\pi$, but it makes $R=I_{3 \times 3}$, so it is contradiction.


If there is two distinct $R_1 \neq R_2$ mapped to same $\mathbb{R}^{n \times n}$, then $D(R_1R_2^{-1})=D(R_1)D(R_2^{-1})=D(R_1)D(R_2)^{-1}=I_{n \times n}$ from Lemma~\ref{lem:inverse}. But $R_1R_2^{-1}$ is not $I$, so there is contradiction. Therefore $D$ is injective.

\end{proof}

\begin{lemma}
\label{lem:identity}
Suppose $D$ satisfies the conditions in Equation~\ref{eqn:D_condition}, and $J_1,J_2,J_3$ satisfy the conditions in Theorem~\ref{thm:D_properties}.
Then, $\forall R \in SO(3), D(R)D(R)^T = D(R)^TD(R) = I$
\end{lemma}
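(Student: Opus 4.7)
The plan is to derive the orthogonality of $D(R)$ directly from the skew-symmetry of each $J_i$, which is already assumed in Theorem~\ref{thm:D_properties}. The chain of reasoning is short: skew-symmetry is closed under real linear combinations, so $M := \theta\hat{\omega}\cdot\vec{J} = \theta(w_1J_1+w_2J_2+w_3J_3)$ satisfies $M^T = -M$. Since $D(R)=\exp(M)$ and the transpose commutes with the matrix exponential, $D(R)^T = \exp(M)^T = \exp(M^T) = \exp(-M)$.

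Next, I would invoke the elementary fact that $\exp(A)\exp(B)=\exp(A+B)$ whenever $A$ and $B$ commute. Here $M$ trivially commutes with $-M$, so
\[
D(R)\,D(R)^T = \exp(M)\exp(-M) = \exp(M-M) = \exp(0) = I_{n\times n},
\]
and likewise $D(R)^TD(R)=\exp(-M)\exp(M)=I_{n\times n}$. This gives the claim.

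I do not expect any real obstacle here: the entire argument is a two-step deduction (skew-symmetry of the exponent, then the commuting-exponentials identity), and both ingredients are already guaranteed by the hypotheses of Theorem~\ref{thm:D_properties} that $-J_i = J_i^T$. The only minor point worth stating explicitly is that $(\exp M)^T = \exp(M^T)$, which follows termwise from the power series definition of $\exp$. No appeal to Lemma~\ref{lem:uniqueness}, Lemma~\ref{lem:compatibility}, or the eigenvalue analysis is needed for this particular lemma; those tools are reserved for the injectivity and determinant conditions.
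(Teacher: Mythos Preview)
Your argument is correct and is in fact cleaner than what the paper writes. You use only the skew-symmetry hypothesis $-J_i=J_i^T$ to conclude that $M=\theta\hat{\omega}\cdot\vec{J}$ is skew-symmetric, then the standard identities $(\exp M)^T=\exp(M^T)=\exp(-M)$ and $\exp(M)\exp(-M)=\exp(0)=I$ finish the job. Nothing is missing.

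The paper's proof of this lemma takes a different (and, as written, incomplete) route: it only verifies that $D(I_{3\times 3})=\exp(\vec{0}\cdot\vec{J})=I_{n\times n}$ and appeals to Lemma~\ref{lem:uniqueness} for well-definedness. By itself that does not give $D(R)D(R)^T=I$ for arbitrary $R$; to close the gap one must also invoke compatibility (Lemma~\ref{lem:compatibility}) to get $D(R)D(R^{-1})=D(I)=I$ and then identify $D(R^{-1})$ with $D(R)^T$, which ultimately rests on exactly the skew-symmetry of $\theta\hat{\omega}\cdot\vec{J}$ that you exploit directly. So your approach bypasses the detour through compatibility and uniqueness entirely and is the more self-contained of the two.
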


\begin{proof}
$$\exp(\vec{0} \cdot \vec{F}) = \exp(\vec{0}) = I_{3 \times 3} \in \mathbb{R}^3$$
So $\vec{0}$ is obviously one possible axis-angle representation of $I_{3 \times 3}$, and $D(I_{3 \times 3})=\exp(\vec{0}\cdot \vec{J})=I_{n \times n}$.
From Lemma~\ref{lem:uniqueness}, $D(I_{3 \times 3})=I_{n \times n}$ is unique.
\end{proof}

\begin{lemma}
\label{lem:determinant}
Suppose $D$ satisfies the conditions in Equation~\ref{eqn:D_condition}, and $J_1,J_2,J_3$ satisfy the conditions in Theorem~\ref{thm:D_properties}.
    $\forall R \in SO(3), \text{det}(D(R))=1$
\end{lemma}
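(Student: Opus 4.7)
The plan is to exploit the exponential form $D(R)=\exp(\theta\hat{\omega}\cdot\vec{J})$ together with the classical identity $\det(\exp(A))=\exp(\mathrm{tr}(A))$, which holds for any square matrix $A$ over $\mathbb{R}$ or $\mathbb{C}$.

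The main step is to observe that each $J_i$ is skew-symmetric (one of the assumed conditions of Theorem~\ref{thm:D_properties}), hence $\mathrm{tr}(J_i)=0$ for $i\in\{1,2,3\}$. Therefore, for any $R\in SO(3)$ with axis-angle representation $(\hat{\omega},\theta)$,
\[
\mathrm{tr}(\theta\hat{\omega}\cdot\vec{J})=\theta\sum_{i=1}^{3}\hat{\omega}_i\,\mathrm{tr}(J_i)=0,
\]
so
\[
\det(D(R))=\det\bigl(\exp(\theta\hat{\omega}\cdot\vec{J})\bigr)=\exp\bigl(\mathrm{tr}(\theta\hat{\omega}\cdot\vec{J})\bigr)=\exp(0)=1.
\]
By Lemma~\ref{lem:uniqueness}, this value does not depend on the choice of axis-angle representation, so $\det(D(R))=1$ is well-defined.

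If one prefers to avoid invoking the exponential-trace identity, an alternative route is available using results already established: Lemma~\ref{lem:identity} gives $D(R)D(R)^T=I$, so $\det(D(R))\in\{-1,+1\}$; the map $R\mapsto\det(D(R))$ is continuous on the connected space $SO(3)$ and takes the value $1$ at $R=I_{3\times3}$ by Lemma~\ref{lem:identity}, hence is identically $1$. I do not anticipate any real obstacle here; the only subtlety to verify is that $R\mapsto D(R)$ is continuous, which follows because $D$ is built from the (continuous) matrix exponential and the axis-angle parametrization on any chart, and the value is independent of the chart by Lemma~\ref{lem:uniqueness}.
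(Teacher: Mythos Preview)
Your proposal is correct, and both routes you outline are valid; neither, however, is the one the paper takes.

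The paper argues as follows: from $RR^T=I$, compatibility (Lemma~\ref{lem:compatibility}), and $D(I)=I$, one obtains
\[
1=\det(D(RR^T))=\det(D(R))\det(D(R^T))=\det(D(R))^2,
\]
so $\det(D(R))\in\{\pm1\}$. It then assumes $\det(D(R))=-1$ for \emph{all} $R$, notes that $\det(D(R_1))\det(D(R_2))=1$ while $\det(D(R_1R_2))=-1$, and declares a contradiction.

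Your primary route via $\det(\exp A)=\exp(\mathrm{tr}\,A)$ and $\mathrm{tr}(J_i)=0$ is more direct: it uses only the skew-symmetry hypothesis on the $J_i$ and bypasses compatibility and orthogonality altogether. Your continuity alternative is also cleaner than the paper's second step. In fact, as written, the paper's contradiction only excludes the case $\det(D(R))\equiv -1$; it does not rule out $\det(D(R))=-1$ for \emph{some} $R$ and $+1$ for others. Either of your arguments (the trace identity, or connectedness of $SO(3)$ plus continuity of $R\mapsto\det(D(R))$) closes this gap rigorously. So your approaches are logically tighter, at the modest cost of invoking a standard identity or a topological fact that the paper does not appeal to.
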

\begin{proof}
\begin{equation*}
1=det(D(RR^T))=det(D(R)D(R^T))=det(D(R))det(D(R^T))=det(D(R))^2
\end{equation*}
Therefore, $det(D(R))=1 \ \text{or} \ -1 \ \forall R\in SO(3)$.

Let's assume $det(D(R))=-1$ for all $R\in SO(3)$. Then, for $R_1, R_2 \in SO(3)$, 
\begin{equation*}
1=det(D(R_1))det(D(R_2))=det(D(R_1R_2))=-1
\end{equation*}
which is contradictory. Therefore $det(D(R))=1$ for all $R\in SO(3)$.

\end{proof}

\section{Proof of Theorem~\ref{thm:axis_rotation} in the main paper}

\axisrotation*

\begin{proof}


Since $J_i$ and  $\hat{\omega}\cdot\vec{J}$ are real skew-symmetric matrices, they are diagonalizable. Further, by proposition~\ref{prop:eigvalue_J}, $\hat{\omega}\cdot\vec{J}$ has the same eigenvalues as $J_i$. This allows us to write
$\exp(\theta \hat{\omega}\cdot\vec{J})= \exp(P(\bold{\Lambda})P^{-1}) = P \exp(\bold{\Lambda}) P^{-1}$ where $ P \in \mathbb{C}^{n \times n}$ is a matrix and $\bold{\Lambda} \in \mathbb{C}^{n \times n}$ is diagonal matrix of the eigenvalues of $\bold{J}$ multiplied by $\theta$. So, $D(R)$ can be expressed with 
\begin{equation*}
D(R)
= P\begin{bsmallmatrix}\exp(-k\mathrm{i}\theta) & \dots & 0\\ \vdots & \ddots & \vdots \\ 0 & \dots & \exp(k\mathrm{i}\theta) \end{bsmallmatrix} P^{-1}
= P\begin{bsmallmatrix}\cos(-k\theta)+\mathrm{i}\sin(-k\theta) & \dots & 0\\ \vdots & \ddots & \vdots \\ 0 & \dots & \cos(k\theta)+\mathrm{i}\sin(k\theta) \end{bsmallmatrix} P^{-1}
\end{equation*}

From this, we can write
\begin{equation*}
D(R) = \exp(\theta \hat{\omega} \cdot \vec{J})
= P \bold{\Lambda} P^{-1}
= \sum_{i=1}^{n} \vec{b}_i{\vec{b}_i}^T(\sin(k_i\theta)+\mathrm{i}\cos(k_i\theta))
\end{equation*}
where $\vec{b}_i$ is the $i^{th}$ eigenvector of $\hat{\omega} \cdot \vec{J}$. 

\end{proof}

\section{Proof of Theorem~\ref{thm:psi_equiv} in the main paper}

\theorempsiequiv*

\label{app:psi_equivariance}


\begin{proof}
We recall the definition of $\psi$
    \begin{align*}
   \psi(\vec{x})  & = \varphi(||\vec{x}||)D(R^z(\hat{x}))v
    \end{align*}
Based on this definition, the theorem would be true if 
\begin{align*}
    \psi(R\vec{x}) & = \varphi(|R\vec{x}|)D(R^z(R\hat{x}))v \\
                   & = \varphi(||\vec{x}||)D(R^z(R\hat{x}))v 
\end{align*}
is equivalent to
\begin{align*}
   D(R) \psi(\vec{x}) = D(R) \varphi(||\vec{x}||)D(R^z(\hat{x}))v 
\end{align*}

By definition, $R^z(\hat{k})$ maps $\hat{z}=\begin{bmatrix} 0, 0,1\end{bmatrix}$ to $\hat{k}$ for some unit vector $\hat{k}$. So, we have
\begin{align*}
    R\underbrace{R^z(\hat{x}) \cdot \hat{z}}_{\hat{x}} = R\hat{x}
\end{align*}
and
\begin{align*}
    \underbrace{R^z(R\hat{x}) \cdot \hat{z}}_{R\hat{x}} = R\hat{x}
\end{align*}

Because $RR^z(\hat{x}) \cdot \hat{z} = Rx = R^z(R\hat{x}) \cdot \hat{z}$, 
\begin{align*}
[RR^z(\hat{x})]^T [R^z(R\hat{x})] \cdot \hat{z} = [RR^z(\hat{x})]^T[RR^z(\hat{x})] \cdot \hat{z} = \hat{z}
\end{align*}

So $[RR^z(\hat{x})]^T [R^z(R\hat{x})]$ must be a rotation about the z-axis. 
This means that there exists some $\gamma \in \mathbb{R}$ where $[RR^z(\hat{x})]^T [R^z(R\hat{x})] = R_z(\gamma)$, and $R_z(\gamma)$ means rotating $\gamma$ angle about the z-axis.

So $[R^z(R\hat{x})] =[RR^z(\hat{x})] R_z(\gamma)$ holds. Considering $\psi(R\vec{x})$, 
\begin{align*} 
    \psi(R\vec{x}) & = \varphi(||\vec{x}||) D(R^z(R\hat{x}))v \\
                   & = \varphi(||\vec{x}||) D(RR^z(\hat{x})R_z(\gamma))v \\
                   & = \varphi(||\vec{x}||) D(R)D(R^z(\hat{x}))D(R_z(\gamma))v
\end{align*}

 since compatibility of $D(R)$ holds from Lemma~\ref{lem:compatibility}.
 
 From the definition of $D$, $D(R_z(\gamma)) = \exp(\gamma J_3)$ holds. Also, considering $v$ is the eigenvector of $J_3$ corresponding to the zero eigenvalue, $J_3 v = 0$. From the definition of the matrix exponential $\exp(\gamma J_3) = I + (\gamma J_3) + (\gamma J_3)^2 / 2 + \cdots $,  $\exp(\gamma J_3)v = v$ satisfies. So 
    \begin{align*} 
    \psi(R\vec{x}) & = \varphi(||\vec{x}||) D(RR^z(\hat{x}))v \\
                   & = \varphi(||\vec{x}||) D(R)D(R^z(\hat{x}))v \\
                   & = D(R)\psi(\vec{x})
    \end{align*}
\end{proof}

\section{Proofs of required propositions }
\label{app:proof_propositions}

\eigvaluesymm*
\begin{proof}

    \textbf{Sharing eigenvalues} $J_1, J_2, \text{ and }, J_3$ have same eigenvalues and multiplicities, proved by Lemma~\ref{lem:eigval_share}.

    \textbf{Integer eigenvalue coefficients.} 
    Since $J_i \in \{J_1, J_2, J_3\}$ is a real skew-symmetric matrix, it can be diagonalized into $J_i = P(\bold{\Lambda})P^{-1}$ where $ P \in \mathbb{C}^{n \times n}$ is a matrix and $\bold{\Lambda} \in \mathbb{C}^{n \times n}$ is diagonal matrix whose diagonal elements $\lambda_i$ are the eigenvalues of $J_i$.

    Then, from the condition of $J_i: \exp(2k\pi J_i) = I_{n \times n}, \forall i\in\{1,2,3\}$

\begin{equation*}
\exp(2k\pi J_i)
= P\begin{bsmallmatrix}
\exp(2k\pi \lambda_1) & \dots & 0\\ 
\vdots & \ddots & \vdots \\ 
0 & \dots & \exp(2k\pi \lambda_n) 
\end{bsmallmatrix} P^{-1}
= I_{n \times n}
\end{equation*}

\begin{equation*}
\begin{bsmallmatrix}
\exp(2k\pi \lambda_1) & \dots & 0\\ 
\vdots & \ddots & \vdots \\ 
0 & \dots & \exp(2k\pi \lambda_n) 
\end{bsmallmatrix} = P^{-1} I_{n \times n} P = I_{n \times n}
\end{equation*}

So, $\exp(2k\pi\lambda_i)=1$ should be satisfied for $\forall k \in \mathbb{Z}, \forall i \in \{1, 2, \dots , n \}$. Therefore, $$\lambda_i = m\mathrm{i}, \exists m \in \mathbb{Z}$$

    \textbf{Existence of null vector.}
    From Lemma~\ref{lem:eigval_share} and \ref{lem:ladder}, $J_1, J_2, J_3, J_+, J_-$ shares the eigenvalues. Let's choose one eigenvalue $\lambda_i$ and the corresponding eigenvector $v_i$ of $J_3$. Then,
    $$J_3 v_i = \lambda_i v_i$$
    $$J_3 J_- v_i = ([J_3, J_-] + J_-J_3)v_i =  iJ_- v_i + J_-(\lambda_i v_i) = (\lambda_i+i)J_- v_i$$
    Here if $J_- v_i \neq \vec{0}$ then $\lambda_i + i$ is eigenvalue of $J_3$ and $J_- v_i$ is the corresponding eigenvector. Then, by doing the same way, if  $J_- (J_- v_i) \neq \vec{0}$, then $\lambda_i + 2i$ is eigenvalue of $J_3$. This procedure can be repeated, but since $J_3$ is a finite matrix, its eigenvalue is also finite. So we can always find $J_- (J_-^{(m)} v_i) = \vec{0}$ where $J_-^{(m)} v_i \neq \vec{0}$ .

    \textbf{Consecutive eigenvalues}
    Then, let's define $m$ as the dimension of the null space of $J_3$. Then we can always find the real orthonormal basis $v_1, v_2, \cdots, v_m$ of the null space of $J_3$ since $J_3$ is a real matrix. Let's choose $v_j \in \mathbb{R}^n$ and it satisfies $J_3 v_j = \vec{0}$. Then if $J_- v_j \neq \vec{0}$ then $i$ is eigenvalue of $J_3$ and $J_- v_j$ is the corresponding eigenvector.
    $$J_3 (J_- v_j) = iJ_3$$
    $$\text{Apply conjugate: } J_3 (J_+ v_j) = -iJ_3$$
    So $-i$ is also an eigenvalue of $J_3$ and $J_+ v_j$ is the corresponding eigenvector. 
    By applying this procedure repeatedly, we can find the first $n_j \geq 0 \in \mathbb{Z}$ that satisfies ${J_-}^{(n_j+1)} v_j = \vec{0}$, and we can find the continuous interval of eigenvalues 
    $$\Lambda_j=\{-n_j i, \cdots, -i, 0, i, \cdots, n_j i \}$$
    with corresponding eigenvectors 
    $$\{ J_+^{(n_j)} v_j, \cdots, J_+ v_j, v_j, J_- v_j, \cdots, J_-^{(n_j)} v_j \}$$

    \textbf{Completeness}
     For each $\Lambda_j$, we can find ${J_-}^{(n_j+1)} v_j = J_- ({J_-}^{(n_j)} v_j) = \vec{0}$ where ${J_-}^{(n_j)} v_j \neq \vec{0}$. So ${J_-}^{(n_j)} v_j$ is a nonzero null vector of $J_-$ while also an eigenvector of $J_3$ corresponding to the eigenvalue $n_j i$.  
     
    If there is another nonzero eigenvalue $\lambda_x$ outside of $\Lambda_1 + \Lambda_2 + ... + \Lambda_m$ and corresponding eigenvector $v_x$, then $v_x$ is independent with $v_1, v_2, \cdots, v_m$. We can always find ${J_-}^{(n_x)} v_x$, which is a nonzero null vector of $J_-$ while also an eigenvector of $J_3$ corresponding to the eigenvalue $\lambda_x + n_x i$. So we found a nonzero null vector of $J_-$ that is independent of existing $k$ null vectors of $J_-$, and it makes the dimension of null space of $J_-$ is bigger than $k$. However, it is a contradiction since the dimension of null space of $J_3$ is $k$ and $J_-$ shares the eigenvalues with $J_3$.

\end{proof}

\eigvalueJ*

\begin{proof}
    Given a rotation $R \in SO(3)$, we can always find an euler angle representation $\alpha, \beta, \gamma$ where $\R = R_x(\alpha) R_y(\beta) R_z(\gamma)$, $R_k$ means the rotation along k-axis, and $\alpha, \beta, \gamma \in \mathbb{R}$.

    Let's think about $R_z(\gamma)$ first.
    From Lemma~\ref{lem:eigval_share}, 

$$\exp(\gamma J_3) J_1 \exp(-\gamma J_3) = J_1 \cos \gamma + J_2 \sin \gamma = (R_z(\gamma)[1, 0, 0]^T) \cdot \vec{J}$$
    
    $$\exp(\gamma J_3) J_2 \exp(-\gamma J_3) = J_2 \cos \gamma - J_1 \sin \gamma = (R_z(\gamma)[0, 1, 0]^T) \cdot \vec{J}$$
$$\exp(\gamma J_3) J_3 \exp(-\gamma J_3) = J_3 = (R_z(\gamma)[0, 0, 1]^T) \cdot \vec{J}$$
By using these terms, 
$$\exp(\gamma J_3) (\vec{\omega} \cdot \vec{J}) \exp(-\gamma J_3) = (R_z(\gamma)\vec{\omega}) \cdot \vec{J}$$

Here $\exp(\gamma J_3) = \exp([0, 0, \gamma] \cdot \vec{J}) = D(R_z(\gamma))$ means the rotation of $\gamma$ angle on $z$-axis.

By doing the same thing on the x-axis and y-axis, we can get below.
$$\exp(\alpha J_1)\exp(\beta J_2)\exp(\gamma J_3) (\vec{\omega} \cdot \vec{J}) \exp(-\gamma J_3) \exp(-\beta J_2) \exp(-\alpha J_1) $$
$$ = (R_x(\alpha)R_y(\beta)R_z(\gamma)\vec{\omega}) \cdot \vec{J}$$
Since compatibility holds in $D(R)$, $D(R) =D(R_x(\alpha) R_y(\beta) R_z(\gamma))=\exp(\alpha J_1)\exp(\beta J_2)\exp(\gamma J_3) $
So, $(R\vec{\omega}) \cdot \vec{J}= D(R) \vec{\omega} \cdot \vec{J}D(R)^T$

Then, we can always find a rotation $R$ from $[0, 0, 1]$ to $\hat{\omega}$ and satisfies $\hat{\omega} \cdot \vec{J}= D(R)J_3D(R)^T$. So $\hat{\omega} \cdot \vec{J}$ shares eigenvalues with $J_3$. From Proposition~\ref{prop:eigvalue_symm}, the eigenvalues of $J_3$ are $\{ -k\mathrm{i}, -(k-1)\mathrm{i}, \dots, -\mathrm{i}, 0, \mathrm{i}, \dots, k\mathrm{i} \}$. Multiplying $\theta$ to a matrix also makes the eigenvalues to be multiplied by $\theta$, so the eigenvalues of $\theta\hat{\omega} \cdot \vec{J}$ are $\{ -k\theta\mathrm{i}, -(k-1)\theta\mathrm{i}, \dots, -\theta\mathrm{i}, 0, \theta\mathrm{i}, \dots, k\theta\mathrm{i}\}$.
\end{proof}

\propkbound*
\begin{proof}
    From Proposition~\ref{prop:eigvalue_symm}, $J_3 \in \mathbb{R}^{n \times n}$ always have the eigenvalues of $\{ -k\mathrm{i}, -(k-1)\mathrm{i}, \dots, -\mathrm{i}, 0, \mathrm{i}, \dots, k\mathrm{i} \}$ For $k$ to be the maximum, the multiplicity should be 1 except zero eigenvalue, and the multiplicity of zero eigenvalue is 1 when $n$ is even, and 2 when $n$ is odd. So, the maximum value of $k$ is $\lfloor \frac{n-1}{2} \rfloor$.
\end{proof}

\begin{proposition}
    \label{prop:periodicity}
    If a skew-symmetric matrix $A \in \mathbb{R}^{n \times n}$ s.t. $A^T = -A$ has the eigenvalues $\lambda_i = m_i \mathrm{i}$ where $i=1, 2, \dots, n$ and $m_i \in \mathbb{Z}$, then $\exp(2k\pi A) = I_{n \times n}$ where $k \in \mathbb{Z}$.
\end{proposition}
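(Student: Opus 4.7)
The plan is to exploit the fact that a real skew-symmetric matrix is normal, hence unitarily diagonalizable over $\mathbb{C}$, and then reduce the matrix exponential to a diagonal computation where the integer-eigenvalue hypothesis makes every diagonal entry equal to $1$.

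First I would observe that $A^T = -A$ implies $AA^* = A A^T = -A^2 = A^T A = A^* A$ (the matrix is real so conjugate-transpose equals transpose), so $A$ is normal. By the spectral theorem for normal matrices there exists a unitary $U \in \mathbb{C}^{n\times n}$ such that $A = U \Lambda U^{*}$, where $\Lambda = \mathrm{diag}(m_1\mathrm{i},\dots,m_n\mathrm{i})$ collects the given eigenvalues.

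Next I would apply the standard identity for the matrix exponential under similarity,
\begin{equation*}
\exp(2k\pi A) \;=\; \exp(2k\pi\, U \Lambda U^{*}) \;=\; U\,\exp(2k\pi \Lambda)\, U^{*},
\end{equation*}
which follows termwise from the power series since $U^{*} U = I$. Because $\Lambda$ is diagonal, $\exp(2k\pi \Lambda)$ is the diagonal matrix whose $j$-th entry is $\exp(2k\pi m_j \mathrm{i}) = \cos(2\pi k m_j) + \mathrm{i}\sin(2\pi k m_j) = 1$, using $k m_j \in \mathbb{Z}$. Hence $\exp(2k\pi \Lambda) = I_{n\times n}$, and so $\exp(2k\pi A) = U I_{n\times n} U^{*} = I_{n\times n}$, as required.

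There is no real obstacle here; the only thing to be careful about is invoking diagonalizability over $\mathbb{C}$ rather than $\mathbb{R}$ (a real skew-symmetric matrix has purely imaginary eigenvalues and is not in general diagonalizable over $\mathbb{R}$, but it is always unitarily diagonalizable over $\mathbb{C}$, which is all that the exponential computation needs). The integer-eigenvalue hypothesis is the only ingredient that forces the exponentials of the eigenvalues to collapse to $1$ after multiplication by $2k\pi$.
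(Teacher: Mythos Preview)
Your proof is correct and follows essentially the same approach as the paper: diagonalize the real skew-symmetric matrix over $\mathbb{C}$, push the exponential onto the diagonal, and use the integer hypothesis to collapse each $\exp(2k\pi m_j\mathrm{i})$ to $1$. If anything, you are slightly more careful than the paper in justifying diagonalizability via normality and the spectral theorem, whereas the paper simply asserts $A = P\Lambda P^{-1}$.
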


\begin{proof}
    Since A is a real skew-symmetric matrix, it can be diagonalized into $A = P(\bold{\Lambda})P^{-1}$ where $ P \in \mathbb{C}^{n \times n}$ is a matrix and $\bold{\Lambda} \in \mathbb{C}^{n \times n}$ is a diagonal matrix whose diagonal elements $\lambda_i = m_i \mathrm{i}$ are the eigenvalues of $A$.

    \begin{equation*}
    \exp(2k\pi A)
    = P\begin{bsmallmatrix}
    \exp(2m_1 k\pi  \mathrm{i}) & \dots & 0\\ 
    \vdots & \ddots & \vdots \\ 
    0 & \dots & \exp(2m_n k\pi  \mathrm{i}) 
    \end{bsmallmatrix} P^{-1}
    = P I_{n \times n} P^{-1} = I_{n \times n}
    \end{equation*}
    
\end{proof}






\section{Architecture Details}
\label{app:architecture_details}

%
%
%

\subsection{Layers} \label{app:layers}
In this section, we clarify the difference between the Vector Neurons and introduce new layers. We built our architecture on Vector Neuron's implementation. For all kinds of layers, the main difference is in the dimension of the vector-list feature $\bold{V}$. In Vector Neurons, $\bold{V} \in \mathbb{R}^{C\times3}$. In our case, we employ an augmented $\bold{V} \in \mathbb{R}^{C \times(3+n)}$, where $C$ is the channel count and $3+n$ is the dimension of an SO(3)-equivariant vector with an added feature size of $n$. This easily adapts to the existing SO(3)-equivariant layers of the Vector Neurons. A linear layer, for example, is redefined only by changing the vector dimension $3$ to $3+n$: $$\bold{V}' = f_{lin}(\bold{V};\bold{W})=\bold{W}\bold{V} \in \mathbb{R}^{C' \times (3+n)}$$ It is worth noting that number of learnable parameter is the same with original VN (i.e., number of elements in $\bold{W}$) while the proposed one achieve significantly better performance than that.
We adopt the input edge convolution layer, linear layer, pooling layer, and invariant layer from Vector Neurons by simply changing the vector dimension. To this end, we also further introduce layers we devised for our method.

\subsubsection{Feature augmentation}

For any input layer that is subject to the SO(3) transformation, we begin with the feature augmentation layer. Here, we define the concatenation of the original vector $\vec{u} \in \mathbb{R}^3$ and its high-dimensional representation $\psi(\vec{u}) \in \mathbb{R}^n$.
Given multi-frequency feature dimensions such as 3+5 or 3+5+7, we iterate over each dimension and expand the feature by concatenating features with different frequencies:
\begin{equation*}
\Psi(\vec{u})=\oplus_{i={3,5,7...}}\psi_i(\vec{u})
\end{equation*}
where $\psi_i(\vec{u})$ is our feature representation given dimension of the feature $i$, which is constructed from $\vec{J}$ by Algorithm \ref{alg:sample_Js}. Within our definition of $\psi$ in \eqref{eq:feature_mapping}, we have $\varphi(\lVert\vec{u}\rVert)$ for the scale, and we use simple MLP with two fully connected layers of dimension 16 and ReLU. The input and output of this MLP is $\mathbb{R}$ to adjust the scale factor for each point.

\subsubsection{Non-Linear Layers}
We introduce a new non-linear layer based on the magnitude scaling of a vector, which is SO(3) equivariant. We first take the Euclidean norm of each vector of the vector list feature $\bold{V}$ which gives us a vector $\vec{q} \in \mathbb{R}^C$. The i-th element $q_i$ is given by: $$q_i = ||\bold{V}_{i,:}||_2 \in \mathbb{R}, \quad i=1,\dots, C$$
The vector $\vec{q}$ is processed through a compact neural network featuring two MLP layers with bias, separated by a 2D ReLU activation layer, denoted as $f_\theta: \mathbb{R}^C \rightarrow \mathbb{R}^C, \ \vec{q} \mapsto \vec{q'}$. Subsequently, our non-linear layer is formulated as the element-wise scaling of each vector $\bold{V}_i$ by its corresponding $q'_i$, expressed as: $$\bold{V}'_{i,j} = q'_i \cdot \bold{V}_{i,j}  \quad \text{for} \ \ i=1,\dots,C \ \ \text{and} \ \ j=1,\dots,(3+n)$$

\subsubsection{Normalization Layers}
We found that the normalization layers introduced in \cite{deng2021vector} do not help in improving the performance. We thus do not incorporate those layers in our architectures.

\subsection{Architectural detail for each task}

\subsubsection{Point cloud classification and segmentation}\label{app:classification}
We utilize PointNet (\cite{qi2017pointnet}) and DGCNN (\cite{wang2019dynamic}) as the underlying architectures for both classification and segmentation tasks. Our approach seamlessly integrates with their existing Vector Neurons implementations. By incorporating an initial feature augmentation layer $\Psi(\vec{x}) \in \mathbb{R}^{(3+n)}$ and substituting the activation function as indicated in \ref{app:layers}, no further modifications to other layers are required. We utilize $n=5$ for all models. We reduced our number of channels to $C/8$ where $C$ is the number of channels in the baseline models (\cite{qi2017pointnet}, \cite{wang2019dynamic}). This is more challenging than $C/3$ by \cite{deng2021vector}. We adopt all other hyperparameters from \cite{deng2021vector}. 

\subsubsection{Neural Implicit Reconstruction}\label{app:architecture_details_occnet}
In the Occupancy Network (\cite{mescheder2019occupancy}), the encoder produces a latent code $z \in \mathbb{R}^C$ from the input point cloud $P \in \mathbb{R}^{N \times 3}$ and the decoder estimates the likelihood of a query point $\vec{x} \in \mathbb{R}^3$ being occupied, conditioned on this latent code $z$. Following the approach used in \cite{deng2021vector}, we substitute the standard PointNet encoder with its \ours-\textsc{vn}-PointNet variant. This modified encoder outputs a vector-list feature $Z \in \mathbb{R}^{C'\times (3+n)}$. Our decoder function $\mathcal{O}$ is defined in others of three invariant compositions of high-dimensional features $\Psi(\vec{x}) \in \mathbb{R}^{(3+n)}, Z \in \mathbb{R}^{C'\times (3+n)}$: $$\mathcal{O}(\vec{x}|Z) = f_{\theta}(\langle \Psi(\vec{x}), Z\rangle, \|\Psi(\vec{x})\|^2, \text{\ours-\textsc{vn}-In}(Z)) \in [0,1]$$ where $\text{\ours-\textsc{vn}-In}(Z)$ is our \ours-\textsc{vn}- variant of the invariant layer (section \ref{app:layers}) which maps SO(3) equivariant feature $Z\in \mathbb{R}^{C' \times (3+n)}$ to SO(3) invariant feature $\Bar{z} \in \mathbb{R}^{C'}$. We use $C=513$ for the Occupancy Network and $C'=171$ for both VN-OccNet and \ours-\textsc{vn}-OccNet following \cite{deng2021vector} resulting in the same number of learnable parameters. We train the network for 300k iterations with a learning rate of 0.0001 and batch size of 64, selecting the models based on the best validation mIoU scores following \cite{mescheder2019occupancy}. We utilize $n=5$ for all models.

\subsubsection{Normal Estimation}\label{app:architecture_details_normal}
For this task, we leverage the VN-PointNet/DGCNN framework with an output head specifically tailored for point cloud normal estimation, as presented in \cite{puny2021frame}. Given that point cloud normal estimation is an SO(3)-equivariant task, we preserve the SO(3)-equivariance of features up to the final output layers. Specifically, we replace the invariant layer, max-pooling layer, and succeeding standard MLP layers from the architecture detailed in \ref{app:classification}. We instead incorporate four intermediate SO(3)-equivariant layers with activations. The concluding layer outputs a feature vector of size 1, represented as $\bold{n} \in \mathbb{R}^{N \times 1 \times 3}$, where $N$ denotes the input point cloud's number of points and 3 signifies each point's normal vector. However, due to our method's higher-dimensional features, the output dimension becomes $\bold{n} \in \mathbb{R}^{N \times 1 \times (3+n)}$. To align the output dimensions with those of the label, we employ a high-frequency representation of the ground-truth normal vectors $\bold{n}_{gt} \in \mathbb{R}^{N \times 3}$, denoted as $\bold{\hat{n}} = \Psi(\bold{n}_{gt}) \in \mathbb{R}^{N \times (3+n)}$ during the training. The loss function minimized is: 
\begin{equation*}
\frac{1}{N} \sum^N_{i=1} \Biggl[ \biggl(1-\frac{\vec{n}_{i}}{||\vec{n_{i}}||}\cdot{\hat{n}_{i}}\biggr) + \text{min}\Bigl(  ||\vec{n}_{i}-\hat{n}_{i}||^2, ||\vec{n}_{i}+\hat{n}_{i}||^2 \Bigr) \Biggr]
\end{equation*}
where $N$ is the total number of points in the point cloud, and $\vec{n}_{i}$ and $\hat{n}_{i}$ are the predicted and ground-truth high-frequency representation of normal vectors respectively for the $i$-th point in the input point cloud. The first term estimates the accuracy of the predicted normal direction, and the second term regularizes the length of the vector to 1. We adopt the default hyperparameters provided in \cite{puny2021frame} including the number of channels. We utilize $n=5$ for all models.

\subsubsection{Point cloud registration}\label{app:architecture_details_reg}
Point cloud registration aims to align two sets of $N$ points $P_1, P_2 \in \mathbb{R}^{N \times 3}$ that come from the same shape. This is formally addressed by the Orthogonal Procrustes problem~\cite{schonemann1966generalized}, which finds the analytical solution to a rotation matrix $R \in SO(3)$ that minimizes $\lVert P_1R^T - P_2 \rVert^{2}_{F}$. However, it requires the knowledge of the correspondences between the points in $P_1, P_2$ which is usually hard to achieve. \cite{zhu2022correspondence} proposes a different approach, which solves for the latent codes $Z_1, Z_2 \in \mathbb{R}^{C \times 3}$ of Vector Neurons instead of the explicit point clouds when the correspondence between points is not known. Due to the initial layer's edge convolution in Vector Neurons, the channel dimension of $Z$ achieves permutation invariance, which facilitates correspondence-free feature registration. Following this approach, we utilize our encoder from the shape completion task (section \ref{sec:implicit_reconstruction}) to find $R$ in the feature space. To acquire $R$ from the rotation in high-dimension $D(R)=\text{exp}(\vec{\omega}\cdot\Vec{\textbf{J}})$ where $\vec{\omega} \in \mathbb{R}^3$, we solve the following optimization problem: $$\vec{\omega} = \argmin_{\vec{\omega}} \sum_{i \in \{3, 5, 7, \dots\}} \lVert \bar{Z}_{1;i} \{\text{exp}(\vec{\omega} \cdot \Vec{\textbf{J}_i})\} ^T - \bar{Z}_{2;i} \rVert^{2}_{F}$$ where $\bar{Z}_{1;i}$ and $\bar{Z}_{2;i}$ are features corresponding to the augmented dimension $i \in \{ 3, 5, 7, \dots\ \}$ of high-dimensional features $Z_{1}, Z_{2} \in \mathbb{R}^{C \times (3+n)}$. We use the Cross-Entropy Method (CEM) (\cite{de2005tutorial}) as a solver.

\section{Algorithmic Details to Get $J_1,J_2,J_3$}
\label{app:detail_J123}
This section is for explaining details in Algorithm \ref{alg:sample_Js} and \ref{alg:sample_J3} to obtain $J_1,J_2,J_3$.

\textbf{Constraints.} In theorem \ref{thm:D_properties}, we have eight constraints to get $J_1,J_2,J_3$.
\begin{align}
[J_1,J_2]&=J_3\label{eq:com_J12}\\
[J_2,J_3]&=J_1\label{eq:com_J23}\\
[J_3,J_1]&=J_2\label{eq:com_J31}\\
J_1+J_1^T&=0\label{eq:J1skew}\\
J_2+J_2^T&=0\label{eq:J2skew}\\
J_3+J_3^T&=0\label{eq:J3skew}\\
\exp(2\pi k J_1)&=I\label{eq:cyclicJ1}\\
\exp(2\pi k J_2)&=I\label{eq:cyclicJ2}\\
\exp(2\pi k J_3)&=I\label{eq:cyclicJ3}\\
\text{Eigenvalue of} \ J_3&=\{-\lfloor \frac{n-1}{2} \rfloor \mathrm{i},...,\lfloor \frac{n-1}{2} \rfloor \mathrm{i} \}\label{eq:eigen_const}
\end{align}
These constraints are redundant, so we first analyze the dependency between conditions. 
\begin{itemize}
\item \eqref{eq:J2skew} can be derived from \eqref{eq:J3skew}, \eqref{eq:J1skew}, and \eqref{eq:com_J31}.
\item \eqref{eq:cyclicJ3} can be derived from \eqref{eq:eigen_const} and other constraints with Lemma \ref{lem:eigval_share}.
\item \eqref{eq:cyclicJ1} and \eqref{eq:cyclicJ2} can be derived from Proposition \ref{prop:periodicity}.
\end{itemize}

Our strategy to get feasible $J_1, J_2, J_3$ consists of three steps.
\begin{enumerate}
\item Sample $J_3$ satisfying \eqref{eq:J3skew} and \eqref{eq:eigen_const}.
\item Construct linear bases of $J_1$ and $J_2$ with \eqref{eq:com_J23}, \eqref{eq:com_J31}, and \eqref{eq:J1skew}.
\item Perform non-linear optimization with \eqref{eq:com_J12}.
\end{enumerate}

Here is a more detailed description of the procedures described in the paper.
For step 2, we pick three equations (\eqref{eq:com_J23}, \eqref{eq:com_J31}, and \eqref{eq:J1skew}) that can be converted into linear relations with elements of $J_1$ and $J_2$. This relation can be expressed as:

\begin{align}
\underbrace{\begin{bmatrix}
I_{n\times n} \otimes J_3 -J_3^T \otimes I_{n\times n}&& -I_{n^2\times n^2} \\
I_{n^2 \times n^2} && -J_3^T \otimes I_{n\times n} + I_{n\times n} \otimes J_3\\
\frac{\partial(\textsc{vec}(J_1)+\textsc{vec}(J_1^T))}{\partial \textsc{vec}(J_1)} && 0
\end{bmatrix}}_A
\begin{bmatrix}
\textsc{vec}(J_1) \\ \textsc{vec}(J_2)
\end{bmatrix}
=0
\label{eq:A_matrix}
\end{align}

where $A\in \mathbb{R}^{3n^2\times 2n^2}$, $\oplus$ denotes Kronecker product and $\textsc{vec}(\cdot)$ is vectorization of matrix. We have $3n^2$ equations and $2n^2$ variables, so this can have no solution. 
But, we found out that there is redundancy within equations, where some equations are expressed by a linear combination of others. For example, $J_1+J_1^T=0$ have $n^2$ equations, but it contains both $J_1(i,j)+J_1(j,i)=0$ and $J_1(j,i)+J_1(i,j)=0$ where $J_1(i,j)$ means $i^{th}$ row and $j^{th}$ column elements of $J_1$.
Additionally, we assume that there exists a solution satisfying all conditions in Theorem~\ref{thm:D_properties}, so there should be a non-empty null space of $A$. Practically, we use \textsc{eig} function in the Numpy library (\cite{harris2020array}) and always check the nullity of $A$ within Algorithm~\ref{alg:sample_Js}.

\section{Additional Experiment}
In this appendix, we delve into a series of experiments designed to validate the efficacy of our feature representation method across multiple contexts.
By integrating our representation, we demonstrate notable enhancements in capturing intricate shape details with minimal impact on inference efficiency, particularly in shape compression tasks (\ref{app:shape_compression_different_dim}) and spherical shape regression (\ref{app:sherical_shape_regression}).
Throughout dimensional analysis to investigate how each dimensional feature contributes to the shape compression 
task(\ref{app:FER_dimensional_analysis}), we get a clearer picture of what's going on.
Finally, our experiments reveal that FER leads to heightened model robustness in tasks such as point cloud completion (\ref{app:point_completion}) and registration (\ref{app:point_registration}).

\subsection{Point Cloud Completion with Different Number of Sampled Points}
\label{app:point_completion}
We experiment to assess the impact of the number of points. We train models with a dataset where each data consists of 300 points. We then evaluate the models using 200 to 600 input points during test time. As illustrated in the following figure~\ref{fig:pcd_com_diff_pnts}, our method is more robust to changes in the number of points compared to both the Vector Neuron (VN)-based and vanilla Occupancy networks.

 \begin{figure}
     \centering
     \includegraphics[width=0.5\textwidth]{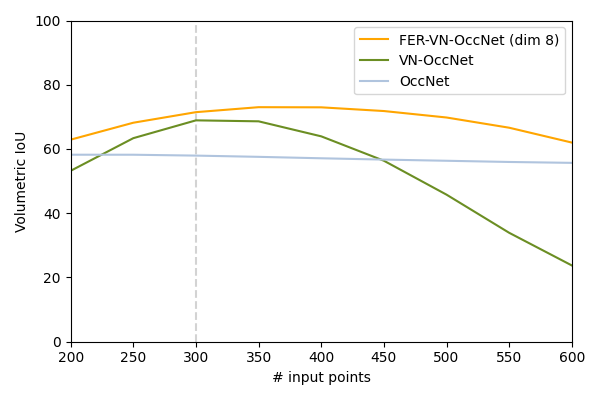}
     \caption{Volumetric mIoU on ShapeNet reconstruction with the different number of sampled input points. Every model is trained with the ShapeNet dataset where each data consists of 300 points.}
     \label{fig:pcd_com_diff_pnts}
 \end{figure}

The reason why performance goes down as the number of points increases over 350 input points is that these methods extract point-wise features from a point cloud. VN uses K-Nearest-Neighbor(KNN) for each point-wise feature initialization, which processes the relative coordinates of adjacent points. Relative coordinates are sensitive to changes in point density. Such changes alter the scale of relative coordinates compared to what was observed during training. This change will decrease the performance not only when the number of points gets smaller but also larger.

\subsection{Shape Compression with Different Dimensionality}
\label{app:shape_compression_different_dim}
We conduct additional experiments in the shape compression task to address the impact of the dimensionality of our feature representation on the computational cost and performance. Below are the results for processing 300 points for a single object for encoding and 100,000 query points for decoding. The results are averaged over 300 predictions.

\begin{table}[h]
\begin{tabular}{c|c|c|c}
Model type           & IoU (\%) & inference time - encoder & inference time - decoder \\ \hline
VN-OccNet            & 73.4     & 3.44 ± 0.05 ms           & 9.57 ± 0.10 ms           \\
FER-VN-OccNet (n=8)  & 81.0     & 3.47 ± 0.06 ms           & 9.64 ± 0.16 ms           \\
FER-VN-OccNet (n=15) & 81.9     & 6.55 ± 0.26 ms           & 9.87 ± 0.21 ms
\end{tabular}
\caption{Shape completion performance and inference time of encoder and decoder for VN-OccNet and FER-VN-OccNet with different dimensionality of feature representation.}
\label{tab:shape_compression_dif_dim}
\end{table}

As the table \ref{tab:shape_compression_dif_dim} shows, incorporating our 8-dimensional feature representation (FER-VN-OccNet (n=8)) boosts performance (a 7.6\% increase in IoU) with a negligible impact on encoder inference time compared to the VN-OccNet baseline. However, expanding the feature representation to 15 dimensions (FER-VN-OCCNet (n=15)) doubles the computational time of the encoder with only a marginal performance improvement. On the other hand, the decoder inference time remained relatively stable across models, underscoring the efficiency of integrating FER into the network.

\subsection{Shape Compression of Each Dimensional Feature}
\label{app:FER_dimensional_analysis}
We have an additional experiment on shape completion that compares which features are responsible for completing which part of the shape. Our composite feature is a concatenation of $n=3,5,7,9$ dimensional features (See Appendix E.1.1 for how we make the concatenation). The composite feature is then processed through a three-layer MLP to predict occupancy values in an occupancy network. To visualize the contribution of each feature dimension on the completed shape, we compute the gradient magnitude of the occupancy predictions with respect to our feature representation of the surface points on the reconstructed mesh for each $n$. More concretely, we have $\psi_3(x), \psi_5(x), \psi_7(x), \psi_9(x)$ where $x \in \mathbb{R}^3$ is a surface point and $\psi_n(x)$ ($\mathbb{R}^3 \rightarrow \mathbb{R}^n$) is our feature presentation with $n$ number of dimensions. These four features are concatenated and processed by MLP to predict occupancy $o=f(\phi_3,\phi_5,\phi_7,\phi_9)$ where $f$ is MLP. Then, gradient used for visualization is $\partial o/ \partial \phi_3$, $\partial o / \partial \psi_5$, $\partial o / \partial \psi_7$, $\partial o / \partial \psi_9$.

 \begin{figure}
     \centering
     \includegraphics[width=1.0\textwidth]{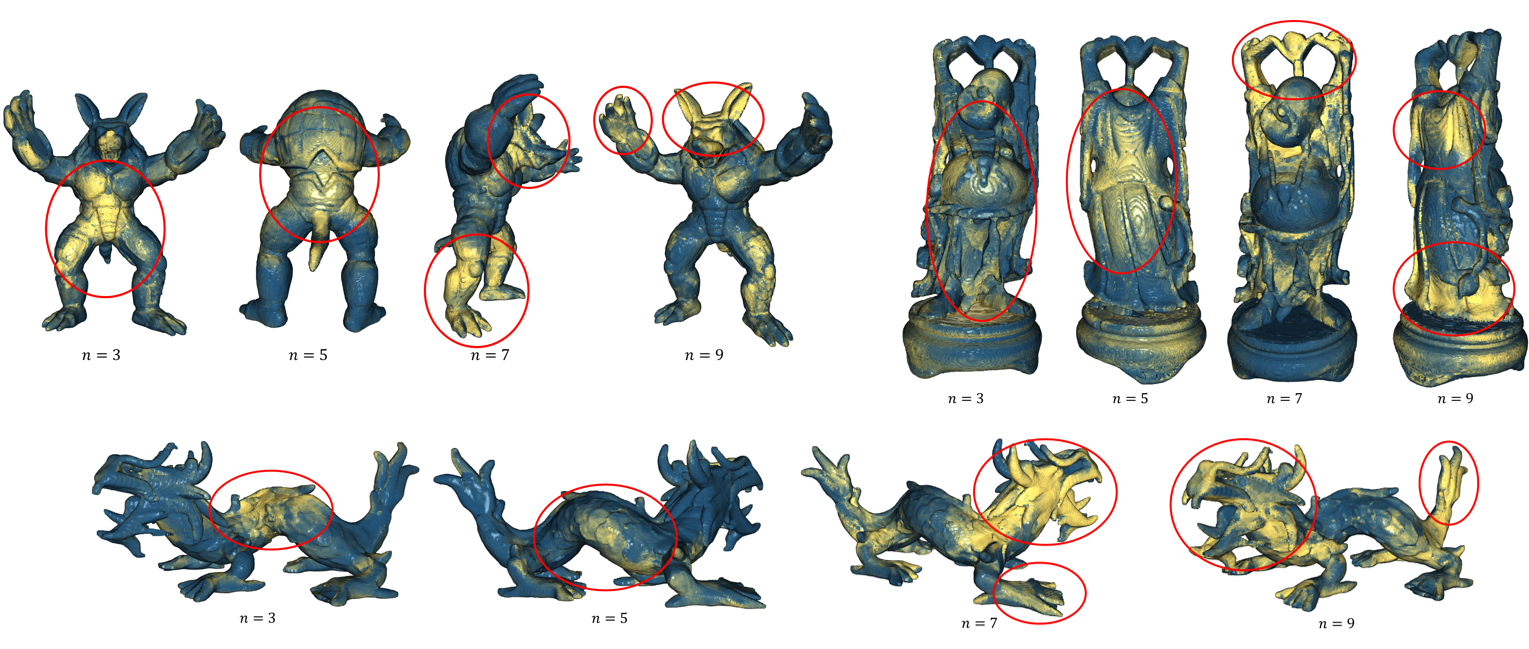}
     \caption{Shape compression result of our model by using the data from \cite{Stanford3DScan}. $n$ means the dimensionality of features, and they are performed individually. The greater the magnitude of the gradient, the more yellow it is.}
     \label{fig:shape_com_dim_feature}
 \end{figure}

The visualization in figure~\ref{fig:shape_com_dim_feature} indicates that the lower-dimensional features, particularly for $n=3$ and $n=5$, predominantly capture the more expansive and volumetric components of the meshes, such as the torso and back of these objects. In contrast, the higher-dimensional features at $n=7$ and $n=9$ tend to focus on finer and more intricate details, such as facial features. These results offer valuable insights into the network's ability to differentiate between broad structural elements and detailed features of the meshes, reinforcing the effectiveness of frequency-based feature mappings in an SO(3) equivariant learning context.

\subsection{Shape Compression with Identical Compression Ratio}
\label{app:shape_compression_identical_ratio}

We conduct an additional experiment where we match the embedding size of all the methods, thus having the same compression ratio, and try to evaluate their reconstruction (i.e. decompression) performance. In this table \ref{tab:shape_compression_identical_ratio}, we observe that our method achieves the highest IoU while having the same compression ratio.

\begin{table}[h]
\centering
\begin{tabular}{c|c|c|c|c}
Method             & Input size    & Embedding size & Compression ratio & IoU (\%) \\ \hline
OccNet             & 300 x 3 = 900 & 513            & 57\%              & 67.5     \\
VN-OccNet          & 300 x 3 = 900 & 171 x 3 = 513  & 57\%              & 73.4     \\
FER-OccNet (n=3+5) & 300 x 3 = 900 & 64 x 8 = 512   & 57\%              & 77.3
\end{tabular}%
\caption{Shape completion performance measured by IoU for each model with the same compression ratio.}
\label{tab:shape_compression_identical_ratio}
\end{table}

\subsection{Point Registration with Different Initial Perturbations}
\label{app:point_registration}
In the point registration application, the conventional Iterative Closest Points (ICP) method is known to be sensitive to initial guesses. That is, when the initial guess is far from the ground truth, ICP often fails to predict good results. On the other hand, the equivariance-based point registration method~\cite{zhu2022correspondence} is demonstrated for its robustness over initial guess. We hypothesize that the proposed \ours-\textsc{vn}-based method also adopts the same characteristic, so we conduct the same experiment in \cite{zhu2022correspondence} to demonstrate this.
To do so, in this experiment, we compare the performance of point registration for three different methods (\ours-\textsc{vn}, VN, ICP) by changing the initial perturbation range.

\begin{table}[h]
\centering
\begin{tabular}{|c|c|c|c|c|c|c|c|}
\hline
Max angle & 0 & 30 & 60 & 90 & 120 & 150 & 180 \\
\hline
\ours-\textsc{vn} (ours) & 0.00348 & 0.00350 & 0.00349 & \textbf{0.00349} & \textbf{0.00350} & \textbf{0.00350} & \textbf{0.00349} \\
\hline
VN & 0.00649 & 0.00649 & 0.00649 & 0.00649 & 0.00649 & 0.00649 & 0.00649 \\
\hline
ICP & \textbf{0.00310} & \textbf{0.00312} & \textbf{0.00340} & 0.00508 & 0.00773 & 0.00966 & 0.01123 \\
\hline
GT & \multicolumn{7}{|c|}{0.00310} \\
\hline
\end{tabular}
\caption{Point registration performance measured by Chamfer Distance for each model by changing the range of initial perturbation. Max angle denotes the initial perturbation is sampled from a uniform distribution with bound $[0,\text{Max angle}]$ in degree.}
\label{tab:point_registration_global}
\end{table}

As the table \ref{tab:point_registration_global} demonstrates, our method is also robust to the initial perturbation, while ICP fails when we apply large perturbation. Notably, the performance with ours is significantly better than the one with VN and it almost reaches the performance of ground truth. Note that we add Gaussian noise to the input point clouds, so the ground truth CD is not zero.

\subsection{Spherical Shape Regression}
\label{app:sherical_shape_regression}
To test the effectiveness of controlling the frequency of the feature representation, we employ a simple spherical shape regression experiment. We first pick a shape from EGAD (\cite{morrison2020egad}). Then, we represent the shape's surface by changing the distance of points on a sphere. Basically, we train a model to predict how far a point is from the center when given a direction in 3D space. To train our model, we chose random directions and used ray casting to get the right distances for those directions.

\begin{figure}[thbp]
\centering
\includegraphics[width=0.99\textwidth]{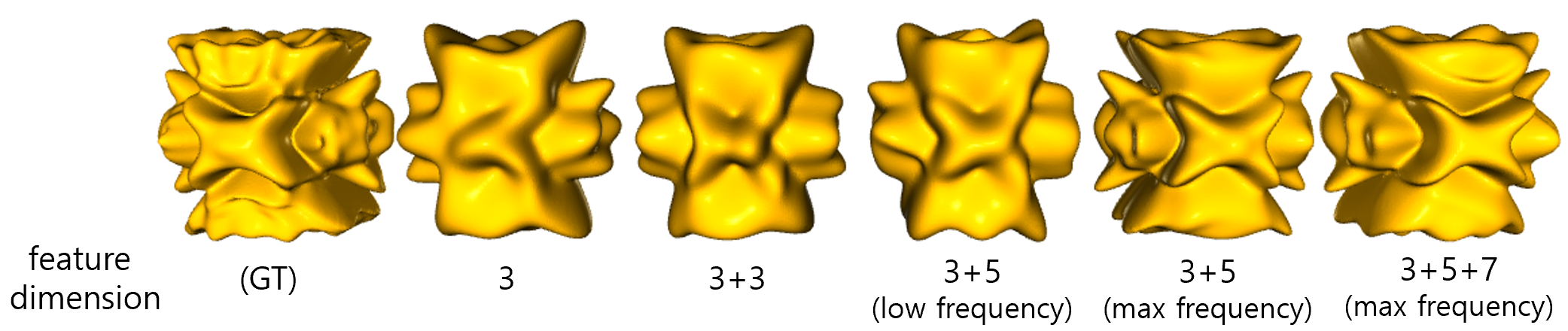}
\caption{ Toy experiment to show the effectiveness of controlling the frequency of the augmented features. One shape from EGAD (\cite{morrison2020egad}) is regressed to networks based on the controlled features with different frequencies.}
\label{fig:SH_reg}
\end{figure}

First, we train the model without feature augmentation. As the reconstructed shape labeled 3 in figure \ref{fig:SH_reg}, the detained features are smooth out. When we increase the dimension of the representation by repeating the point coordinate twice to make the dimension of the representation 6 (3+3), still, the reconstruction quality is bad. This implies that just making the feature high-dimensional is not useful, and we need additional factors to increase expressiveness.

We observe how the frequency in the augmented feature is depicted by the eigenvalues of \(J_i\). Assuming that utilizing only three dimensions does not adequately capture the shape details, we opt for the addition of 5-dimensional features. From Proposition~\ref{prop:k_bound}, the eigenvalues including multiplicity will be either \(\Lambda_1 = \{-\mathrm{i}, 0, 0, 0, \mathrm{i}\}\) or \(\Lambda_2 = \{-2\mathrm{i}, -\mathrm{i}, 0, \mathrm{i}, 2\mathrm{i}\}\). Based on Theorem \ref{thm:axis_rotation}, higher magnitude eigenvalues render \(\psi\) more sensitive to the input feature, thereby capturing more details. Fig.~\ref{fig:SH_reg} demonstrates the impact of the frequency of the augmented feature on the shape details. A low frequency, where the 5-dimensional augmented feature used \(J_3\) with eigenvalues of \(\Lambda_1\), shows negligible improvement. However, employing the maximum frequency, with the augmented feature using \(J_3\) having eigenvalues of \(\Lambda_2\), results in a significant enhancement compared to the low-frequency augmented feature.

\subsection{Application of FER on Vector Neuron-based Methods}
\label{app:FER_graphonet}
Our feature representation, FER, can be integrated into any of VN-based methods. We applied our method to one of the recent VN-based methods, GraphONet~\cite{chen20223d}. We select 5 meshes from \cite{Stanford3DScan}, and generate a dataset with 300 surface points and 1024 query points to evaluate occupancy. All meshes are normalized in scale, and we use query points generated by adding small Gaussian noise to the surface points. We train GraphONet baseline and FER-GraphONet, which is a variant of GraphONet by integrating our proposed feature representation (FER). We apply a feature dimension of 8 (3+5). To evaluate representation power, we adopt the test setting of the occupancy network, where reconstruction quality is evaluated on the meshes used in training. For the evaluation metric, we use volumetric intersection over union (IoU).

\begin{table}[h]
\centering
\begin{tabular}{c|c}
Method        & IoU (\%) \\ \hline
GraphONet     & 56.0     \\
FER-GraphONet & 57.6

\end{tabular}

\caption{Shape reconstruction result measured by volumetric intersection over union (IoU). Our method is applied to GraphONet, and shows improved performance.}
\label{tab:graphonet}

\end{table}

As shown in table \ref{tab:graphonet}, we observe enhanced performance, reinforcing the effectiveness of our approach. The result for GraphONet is 56.0\% and 57.6\% for FER-GraphONet, which shows 1.6\% improvement. The results demonstrate that our proposed feature representation further improves performance when used with state-of-the-art methods.




\end{document}